\def\BibTeX{{\rm B\kern-.05em{\sc i\kern-.025em b}\kern-.08em
    T\kern-.1667em\lower.7ex\hbox{E}\kern-.125emX}}
\newtheorem{remark}{Remark}
\newtheorem{theorem}{Theorem}
\newtheorem{definition}{Definition}
\newtheorem{corollary}{Corollary}[theorem]
\newcommand{\rebuttalnew}[1]{#1}
\newcommand{\rebuttal}[1]{#1}
\begin{document}
\title{Edge Delayed Deep Deterministic Policy Gradient:\\ efficient continuous control for edge scenarios}


\author{Alberto Sinigaglia$^1$, Niccolò Turcato$^2$, Carli Ruggero$^2$, Gian Antonio Susto$^{1,2}$

\thanks{This study was partially carried out within the PNRR research activities of the consortium iNEST (Interconnected North-Est Innovation Ecosystem) funded by the European Union Next-GenerationEU (Piano Nazionale di Ripresa e Resilienza (PNRR) - Missione 4 Componente 2, Investimento 1.5 - D.D. 1058 23/06/2022, ECS00000043).}

\thanks{$^1$ Human Inspired Technology Research Centre, University of Padova, Padova (PD), Italy}
\thanks{
$^2$Department of Information Engineering, University of Padova, Padova (PD), Italy}
}

\markboth{Journal of IEEE Transactions on Automation Science and Engineering}{Authors: Edge Delayed Deep Deterministic Policy Gradient: efficient continuous control for edge scenarios}

\maketitle

\begin{abstract}
\rebuttal{Deep Reinforcement Learning (DRL) has emerged as a powerful paradigm for learning complex policies directly from high-dimensional input spaces, enabling advances across a variety of domains. Modern DRL algorithms often rely on dual-network Q-learning architectures to approximate optimal policies to overcome overestimation bias. Recent research has introduced approaches leveraging multiple Q-functions to further mitigate overestimation effects and enhance policy reliability. However, there is a growing emphasis on deploying DRL in edge scenarios, where privacy concerns and stringent hardware constraints necessitate highly efficient algorithms. In such environments, the computational and memory efficiency of learning methods is of critical importance. In this context, we propose Edge Delayed Deep Deterministic Policy Gradient (EdgeD3), a novel reinforcement learning algorithm specifically designed for edge computing settings. EdgeD3 offers significant reductions in GPU time (by 25\%) and computational and memory usage (by 30\%), while consistently achieving or surpassing the performance of state-of-the-art algorithms across multiple benchmarks and in real-world tasks.}
\end{abstract}

\renewcommand{\abstractname}{Note to Practitioners}
\begin{abstract}
Driven by the growing need for efficient computational solutions in automation, this research introduces the Edge Delayed Deep Deterministic Policy Gradient (EdgeD3), a novel Deep Reinforcement Learning algorithm designed for applications on edge devices with a limited computational budget.
EdgeD3 is developed to enable on-device execution of policy learning, when computing resources are at a premium, such as in smart manufacturing and autonomous vehicle systems.
EdgeD3 can learn highly effective policies, on par of state-of-the-art algorithms, while utilizing significantly fewer resources.
This would empower autonomous devices to operate independently of cloud-based systems, fostering faster operational speeds and enhancing data privacy.
\end{abstract}

\begin{IEEEkeywords}
Continuous Control, Deep Deterministic Policy Gradient, Deep Reinforcement Learning, Edge Computing, Q-Learning 
\end{IEEEkeywords}

\section{Introduction}
\label{sec:intro}
\IEEEPARstart{A}{utonomous} agents operating in dynamic environments require robust training methods, with Reinforcement Learning (RL) offering a potent framework for such continuous adaptation. Control strategies are of paramount importance in domains characterized by continuous action spaces, as discussed in recent literature \cite{recht2019continuous_control}. 

\rebuttal{Such approaches have been applied to nagivation \cite{cai2024deep,hao2021deep,yan2021hybrid}, robotic manipulation \cite{cui2024task,zhou2025sample} and Athletic Intelligence \cite{wiebe2024reinforcement}. Furthermore, they are also gaining attention in the multi-agent settings, controlling independently multiple agents \cite{miao2024effective,li2025transfer,li2024multi}.}

Actor-critic methods, particularly those utilizing temporal difference learning \cite{dann2014policy_temporal_diff}, represent a foundational approach within this area. However, the integration of Q-learning with deep neural networks to form a critic function has set new benchmarks \cite{watkins1992qlearning}, enhancing policy optimization through advanced policy gradient techniques \cite{lillicrap2015ddpg, fujimoto2018td3, haarnoja2018sac}.  \\
These state-of-the-art approaches, however, are not without their pitfalls, particularly the tendency of Q-learning to induce overestimation bias \cite{van2016deep}. To combat this, the Twin Delayed Deep Deterministic Policy Gradient (TD3) method was developed, innovatively applying Clipped Double Q-Learning (CDQ) to refine accuracy in action value estimations \cite{fujimoto2018td3}. By deploying two independently trained neural networks, this method systematically lowers the risk of overestimating the Q-values by adopting the lesser of the two during the training process. Despite its benefits, CDQ can introduce underestimation bias, albeit with a lesser impact on policy modifications when compared to overestimation, as argued by the proponents of TD3 \cite{fujimoto2018td3}. Furthermore, the introduced estimator leaves very little room for tuning the trade-off.  Indeed, new developments \cite{lyu2022efficient} built on top of TD3 optimize for a convex combination of the minimum between the estimates, the CDQ estimate, and the maximum of the two. \rebuttal{Thanks to such improvements, these algorithms have been widely applied to various sectors, such as robotic grasping \cite{zhou2024t,aslam2025dartbot}, wind turbine control \cite{egbomwan2024physics}, and path following \cite{ma2023sample}.}

\rebuttal{Recently, new algorithms further increased the pool of Q-networks in order to have a less noisy estimate of the true Q-learning target \cite{li2023realistic, chen2020randomized,shu2025episodic,messaoud2024s, sturm2025deep}, using up $10$ or more networks}. These new enlarged pools of independent estimates are usually combined with multiple steps of updates, overshadowing previous algorithms' performances, while TD3\cite{fujimoto2018td3} performs a single update per step. These enhancements introduce a very noticeable overhead, using up to $10$x more memory and $10$x more computational resources than the original algorithm they build on top of.

The increase in computational cost of these new algorithms \cite{li2023realistic, chen2020randomized,shu2025episodic,messaoud2024s, sturm2025deep} is prohibitive in many low-resource scenarios, for example, in edge computing applications. Many edge computing applications can benefit from data-driven approaches. For example, in the domain of autonomous driving, real-time sensing and decision processing can be conducted by deploying edge computing nodes on self-driving vehicles; this advancement enables a reduction in response times and enhances driving safety \cite{liu2020computing}. Similarly, in the field of smart healthcare, edge computing nodes utilized on wearable and medical devices allow for the monitoring of patients’ physiological parameters in real-time. The collected data is then transmitted to the cloud, where it is analyzed and used for diagnosis, facilitating the realization of telemedicine and personalized medicine \cite{catarinucci2015iot}. The ability to deploy deep learning models on the edge is lately gaining more and more attention due to the appealing properties of on-device learning, for its decentralized computation, for economic purposes, and for its privacy-preserving nature. Such a problem can be tackled using quantization \cite{wang2021enabling} or pruning \cite{cheng2023survey}. However, for the case of Reinforcement Learning, in order to allow the personalization of such models to the context in which the edge device is deployed, it is also important to tackle the algorithmic side. Indeed, differently from most other areas of machine learning, Deep RL requires additional computation and additional models for the learning to be carried out, thus adding a non-negligible overhead. \rebuttal{Many applications \cite{cai2024deep,hao2021deep,yan2021hybrid,cui2024task,zhou2025sample,miao2024effective,li2025transfer,li2024multi} can be found using these approaches to learn effective policies, and notably most of them share the property that a real world deployment of the proposed approach would imply a constrained computational reosurce availability. For this reason, it is of paramount importance to consider the computational cost of Deep RL algorithms, even over raw policy-performance, which is an aspect drastically underrated by these latest advancements in continuous control.}

In this paper, we propose an alternative to Deep Deterministic Policy Gradient (DDPG) that tackles overestimation with a single $Q$ estimate. This algorithm, called Edge Delayed Deep Deterministic Policy Gradient (EdgeD3), has a computational cost lower than DDPG by $25\%$ while maintaining the same memory footprint. It does so with a new expectile loss, which induces an underestimation bias that evens the overestimation caused by Q-learning. Even though being cheaper, EdgeD3 has performances comparable or superior to the state-of-the-art in most cases, while using $30\%$ less processing resources and having $30\%$ less memory footprint compared to such more advanced methods. Thanks to its memory efficiency and requiring less computational resources, it is much more suited for low-resource settings, such as edge computing applications, where CPU computing time, energy savings, and memory usage are highly impactful. Furthermore, such an ability to require less computing and memory can allow for in-device learning, preserving privacy, which is ever more important in such scenarios. In addition, the introduced loss formulation allows for more control over the estimation bias with respect to the CDQ approach. To benchmark and compare the proposed algorithm \rebuttal{in simulation}, we use a selection of Mujoco \cite{mujoco} robotics environments from the OpenAI Gym suite \cite{OpenAI_gym}. \rebuttal{Furthermore, we also validate the proposed approach on real-world navigation tasks, using custom TurtleBots, which are widely used mobile robots \cite{xin2021multimobile,anderson2020mobile,gao2023efficient}, with limited computing resources.}

\rebuttal{The key contributions of this paper are summarized as follows:}

\begin{itemize}
    \item \rebuttal{We propose a novel approach to tackle overestimation in DDPG-like algorithms using Expectile estimations, thus not requiring additional components;}
    \item \rebuttal{We show that such a quantity can be efficiently estimated via samples, thus allowing convergence under simulation-based learning;}
    \item \rebuttal{We empirically evaluate the proposed approach on different robotic simulation tasks to estimate the improvement brought by the novel loss formulation;}
    \item \rebuttal{We test the proposed approach on two different real-world navigation tasks to highlight the importance of update frequency.}
\end{itemize}

The paper is structured as follows: in Section \ref{sec:related_work}, we discuss recent contributions that are relevant to the scope of this paper, while in Section \ref{sec:background}, we briefly review the required theoretical background. \Cref{sec:overest_underest} discusses the newly introduced loss and its theoretical foundation. In Section \ref{sec:smoothing} we discuss the stabilization of the optimization procedure for the DDPG algorithm and we introduce the final EdgeD3 algorithm. \Cref{sec:experiments} compares the proposed EdgeD3 algorithm to the state-of-the-art both from a computational expensiveness side and performance point-of-view, \rebuttal{both on simulation and on two real-world navigation tasks}. Finally, Section \ref{sec:conclusions} concludes the paper by briefly discussing some potential future lines of contributions.

\section{Related work}
\label{sec:related_work}
The problem of estimation bias in Value function approximation has been widely recognized and addressed in numerous studies. Notably, Q-learning has been identified to exhibit overestimation bias in discrete action spaces, as highlighted in \cite{thrun2014issues_funapprox}. A seminal response to this challenge was Double Q-learning, introduced by Van Hasselt \cite{hasselt2010double_qlearning}, marking a foundational development in this field. Building on this, the Maxmin Q-learning approach \cite{lan2019maxmin_qlearning} demonstrated that employing a broader ensemble of more than two Q estimates could substantially alleviate this bias and enhance the efficacy of Q-learning.

In the context of controlling continuous action spaces, recent contributions have tackled the dual issues of underestimation and overestimation biases by ensembling Q function estimates \cite{kuznetsov2020controlling_overest, chen2020randomized,wei2022controlling,li2023realistic,shu2025episodic,messaoud2024s, sturm2025deep}. Truncated Quantile Critics (TQC) was proposed by Kuznetsov et al. \cite{kuznetsov2020controlling_overest}, which extends the Soft Actor-Critic (SAC) by integrating an ensemble of five critic estimates. This model not only offers a distributional representation of the critic but also implements truncation during critic updates to reduce overestimation bias. Moreover, TQC refines the actor update mechanism by employing an averaged ensemble of Qs.

Mirroring some aspects of TQC, Randomized Ensembled Double Q-learning (REDQ) developed by Chen et al. \cite{chen2020randomized} uses a larger ensemble of 10 networks. Unlike TQC, In REDQ, the critic estimates are updated multiple times for each step in the environment, 20 times in the presented results.

Further advancing the methodology, Quasi-Median Q-learning (QMQ) introduced by Wei et al. \cite{wei2022controlling} employs four Q estimates and the quasi-median operator to compute the targets for critic updates. This method highlights a trade-off approach between overestimation and underestimation, while the policy gradient is computed relative to the mean of the Qs.

Additionally, the Realistic Actor-Critic (RAC) approach by Li et al. \cite{li2023realistic} seeks to strike a balance between value overestimation and underestimation, employing an ensemble of 10 Q networks. In this approach, the ensemble of Q functions is updated 20 times per environmental step using targets computed from the mean of the Qs minus one standard deviation, with the actor update maximizing the mean of the Q functions. This ensemble strategy has been applied successfully to both TD3 and SAC, achieving competitive performance and sample efficiency akin to Model-Based RL \cite{janner2019mbpo}.

\rebuttal{Recent advancements \cite{shu2025episodic,messaoud2024s,sturm2025deep} have introduced innovative methods for aggregating Q-value predictions to achieve more accurate and unbiased, non-overestimated evaluations of action quality. In contrast, \cite{messaoud2024s} proposes a distinct methodology that leverages an energy-based model (EBM) to address the overestimation problem. However, this approach necessitates the use of sampling procedures to obtain the final estimate, rendering it impractical for deployment in resource-constrained environments.}

\rebuttal{\emph{\textbf{Remark:}}} the contributions discussed in this section enhance TD3 or SAC by employing large ensembles of $Q$ Networks in the critic or by performing multiple steps of training at each time step, thereby \rebuttal{making them infeasible for edge scenarios}. In contrast, our work concentrates on enhancing the performance by tackling overestimation bias, \rebuttal{further improving over} the CDQ mechanism, \rebuttal{while reducing} computational burden of the DDPG algorithm.

\section{Background}
\label{sec:background}

Reinforcement Learning (RL) is modeled as a Markov Decision Process (MDP), encapsulated by the tuple $(\mathcal{S}, \mathcal{A}, \mathcal{P}, R, \gamma)$. The components $\mathcal{S}$ and $\mathcal{A}$ represent the continuous state and action spaces, respectively, necessitating a continuous transition density function $\mathcal{P}: \mathcal{S} \times \mathcal{A} \times \mathcal{S} \rightarrow [0, \infty)$. The reward function is denoted as $r: \mathcal{S} \times \mathcal{A} \times \mathcal{S} \rightarrow \mathbb{R}$, and the discount factor is given by $\gamma \in [0,1]$. Policy $\mu$, parameterized by $\phi$, is a mapping at each time-step $t$ from the current state $s_t \in \mathcal{S}$ to an action $a_t \in \mathcal{A}$, defined by the conditional distribution $\mu_\phi(a_t|s_t)$. This policy is often realized through a Neural Network.

The principal objective in RL is to find a policy $\mu$ that maximizes the expected discounted sum of rewards, mathematically expressed as $R_0 = \mathbb{E}_{\mu_\phi}[\sum_{t=0}^\infty \gamma^t r_t]$. To facilitate this, RL utilizes two primary constructs: the value function $V$ and the action-value function $Q$, defined as:
\begin{align}
V^\mu(s) &= \mathbb{E}_{\mu}[R_t \mid S_t = s] \\
Q^\mu(s, a) &= \mathbb{E}_{\mu}[R_t \mid S_t = s, A_t = a]
\end{align}

Within the framework of Reinforcement Learning, the $Q$ function can be recursively defined, leading to its essential role in both theoretical exploration and practical application:
\begin{equation}
Q^\mu(s, a) = \mathbb{E}_{\mu}[r(s,a,s') + \gamma \mathbb{E}[Q^\mu(s', a')]]
\end{equation}
Under the assumption that the policy at subsequent time $t+1$ is optimal, Q-learning reformulates the $Q$ function as:
\begin{equation}
Q^\mu(s, a) = \mathbb{E}[r(s,a,s') + \gamma \max_a Q^\mu(s', a)]
\end{equation}
This off-policy characterization allows the $Q$ function to depend solely on the environmental dynamics. If the greedy policy is modeled as a neural network $\mu$, parameterized by $\phi$, the expression for the $Q$ function refines to:
\begin{equation}
Q^\mu(s, a) = \mathbb{E}[r(s,a,s') + \gamma Q^\mu(s', \mu_\phi(s'))]
\end{equation}

The Deterministic Policy Gradient (DPG) algorithm \cite{silver2014deterministic_policy_grad} employs the $Q$ function to derive a policy update rule for a differentiable model $Q_\theta$:
\begin{align}
    \nabla J(\phi) &= \nabla_\phi [Q_\theta(s, a)|_{s = s_t, a = \mu_\phi(s)}], \\
    &= \nabla_a [Q_\theta(s, a)|_{s = s_t, a = \mu_\phi(s)}] \nabla_\phi \mu_\phi(s) |_{s=s_t}
\end{align}
Thanks to this reformulation, we are able to train concurrently a Q-function and a policy using gradient descent without relying on Policy gradient techniques, which are outperformed by the former.

Building on top of DDPG, TD3 \cite{fujimoto2018td3} introduces strategies to mitigate the overestimation bias prevalent in prior models. It incorporates a double $Q$ estimation to temper the learning targets:
\begin{equation}
    y = r + \gamma \min_{i = 1, 2} Q_{\theta'_i}(s',a'), \text{where } a' \sim \mu_{\phi'}(s')
\end{equation}
Additionally, TD3 employs two Exponential Moving Averages (EMA) of the networks, which are updated at each learning step, enhancing the stability. 
\section{Overestimation and Underestimation in Q-Learning}
\label{sec:overest_underest}
When dealing with discrete action spaces, the Value function can be optimized with Q-learning with the greedy target $y = r + \text{ max}_{a'}Q(s', a')$. However, in \cite{thrun2014issues_funapprox}, it has been proven that if this target has an error, then the maximum over the value biased by this error will be greater than the true maximum in expectation. Consequently, even when errors initially have mean zero, they probably lead to consistent overestimation biases in the updates of values, which are then carried through the Bellman equation. In \cite{fujimoto2018td3}, the authors have shown both analytically and experimentally that this overestimation bias is also present in actor-critic methods.
While the overestimation may seem minor with each value update, the authors express two concerns. First, if not addressed, the overestimation could accumulate into a more substantial bias over numerous updates. Second, an imprecise value estimate has the potential to result in suboptimal policy updates. This poses a significant issue, as it initiates a feedback loop where suboptimal actions, favored by the inaccurate critic, can be reinforced in subsequent policy updates. For these reasons, CDQ was introduced in \cite{fujimoto2018td3} in the TD3 algorithm, showing significant improvements with respect to previous state-of-the-art, i.e., DDPG. However, CDQ has two main drawbacks: (i) it introduces an uncontrollable underestimation bias in the critic, and (ii) memory and computation consumption are doubled in the critic estimate due to the introduction of a second $Q$ network. This expensiveness is shared by SAC, too, and is exacerbated in newer, improved alternatives, as discussed previously. 

The rest of this section proposes an extension to DDPG for the control of the overestimation bias with an alternative strategy to CDQ without the need to introduce a second $Q$ network.

\subsection{Tackling Overestimation with a single  $Q$ estimate}
\label{subsec:reducing_q_overest_single_q_est}
TD3 applies CDQ in the critic updates in order to favor underestimation over overestimation, hoping to counterbalance the bias introduced by Q-learning. Even though TD3 is an effective algorithm and theoretically sound, taking the minimum between the two estimates leaves very little room for adjusting this bias in case we have any evidence that it's hurting the performances. For this reason, we explore a method that allows more control over a possible underestimation bias to compensate for the overestimation induced by Q-learning, with a single $Q$ function estimate, thus making it computationally cheaper and having a smaller memory footprint compared to the CDQ mechanism shared by TD3 and SAC. Specifically, we propose to change the CDQ mechanism with an Expectile Regression Loss for a single $Q$ function.

The $\tau$ expectile in probability theory for a cumulative density function $F$ of the random variable $X$ is the solution of the following equation \cite{newey1987asymmetric}:
\begin{equation}
\label{eq:expectile_orig}
(1-\tau) \int_{-\infty}^t (t-x) dF(x)  = \tau \int_t^{+\infty} (x-t) dF(x)
\end{equation}
However, the value for $\tau$ can be hard to interpret, so, for this reason, we will use the following equivalent definition using two hyperparameters $\alpha, \beta$:
\begin{align}
\label{eq:expectile_alpha_beta}
\begin{split}
\frac{\alpha}{2\max(\alpha, \beta)} \int_{-\infty}^t& (t-x) dF(x)  = \\
&\frac{\beta}{2\max(\alpha, \beta)} \int_t^{+\infty} (x-t) dF(x)
\end{split}
\end{align}

Indeed, it can be seen how if we set $\tau = 0.5$, then $t = \mathbb{E}[X]$. More specifically, $\tau$ defines a monotonically increasing mapping with respect to $t$, thus allowing to control the distance to the mean.

\begin{definition}
    We say that a function $f: \mathbb{R} \rightarrow \mathbb{R}$ is monotonic non decreasing if and only if, given $x_1,x_2\in \mathbb{R}$ and $x_1 < x_2$, then $f(x_1) \le f(x_2)$
\end{definition}

\begin{theorem}[Monotonicity of Expectiles]
\label{thm:monotonic_expectiles}
    The function defined in \cref{eq:expectile_orig} is monotonic non-decreasing, thus, given $\tau_1 \le \tau_2$, then $t_1 \le t_2$, with $t_1$ and $t_2$ the respective expectiles solution of \cref{eq:expectile_orig} $\tau_1 \le \tau_2$.
\end{theorem}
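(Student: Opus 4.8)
The plan is to treat the defining relation \cref{eq:expectile_orig} as an implicit equation determining the expectile $t$ as a function of $\tau$, and to show that this function has non-negative derivative. First I would move everything to one side and set
\begin{equation*}
G(t,\tau) = (1-\tau)\int_{-\infty}^t (t-x)\,dF(x) - \tau\int_t^{+\infty}(x-t)\,dF(x),
\end{equation*}
so that the $\tau$-expectile is characterized by $G(t(\tau),\tau)=0$. Assuming $X$ has a finite first moment and is not a point mass, so that the expectile is well defined and unique for each $\tau\in(0,1)$, I would invoke the implicit function theorem, which reduces the claim to a sign analysis of the two partial derivatives of $G$.

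The key computational step is to differentiate $G$ under the integral sign. For $\partial G/\partial t$ I would apply the Leibniz rule to each integral; crucially, the boundary contributions vanish because the integrands $(t-x)$ and $(x-t)$ are both zero at $x=t$, leaving only
\begin{equation*}
\frac{\partial G}{\partial t} = (1-\tau)F(t) + \tau\bigl(1-F(t)\bigr),
\end{equation*}
which is strictly positive whenever $0<\tau<1$ and $F$ is non-degenerate at $t$. The derivative in $\tau$ is immediate,
\begin{equation*}
\frac{\partial G}{\partial \tau} = -\int_{-\infty}^t (t-x)\,dF(x) - \int_t^{+\infty}(x-t)\,dF(x) = -\,\mathbb{E}\bigl[\,|X-t|\,\bigr] \le 0,
\end{equation*}
since both integrands are non-negative. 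Combining these through the implicit function theorem yields
\begin{equation*}
\frac{dt}{d\tau} = -\frac{\partial G/\partial\tau}{\partial G/\partial t} = \frac{\mathbb{E}\bigl[\,|X-t|\,\bigr]}{(1-\tau)F(t)+\tau\bigl(1-F(t)\bigr)} \ge 0,
\end{equation*}
which shows that $t(\tau)$ is monotonic non-decreasing, and hence $\tau_1\le\tau_2$ implies $t_1\le t_2$.

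An alternative, derivative-free route that sidesteps any smoothness assumption on $F$ is to rewrite \cref{eq:expectile_orig} as $H(t)/J(t) = \tau/(1-\tau)$, where $H(t)=\mathbb{E}[(t-X)^+]$ is non-decreasing in $t$ and $J(t)=\mathbb{E}[(X-t)^+]$ is non-increasing in $t$. The ratio on the left is therefore non-decreasing in $t$, while the right-hand side is strictly increasing in $\tau$, so a short monotone-inversion argument delivers the same conclusion.

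I expect the main obstacle to lie not in the derivative computations themselves but in the careful treatment of the degenerate and boundary cases: justifying differentiation under the integral for a general $F$ (possibly with atoms), and guaranteeing that $\partial G/\partial t$ is strictly positive so that the implicit function theorem applies and the expectile is uniquely defined. These are precisely the points where the finite-moment and non-degeneracy hypotheses on $X$ enter, and I would state them explicitly at the outset.
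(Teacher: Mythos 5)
Your proof is correct, but your primary route is genuinely different from the paper's; in fact, the paper's own argument is essentially the one you relegate to your final ``alternative'' paragraph. The paper divides \cref{eq:expectile_orig} by $(1-\tau)$ and by $\int_t^{+\infty}(x-t)\,dF(x)$ to obtain $\tau/(1-\tau) = \mathbb{E}[(t-X)^+]/\mathbb{E}[(X-t)^+]$, observes that the left side is strictly increasing in $\tau$ (derivative $1/(1-\tau)^2$) while the right side is non-decreasing in $t$ (non-negative numerator non-decreasing, non-negative denominator non-increasing), and concludes by exactly the monotone-inversion reasoning you sketch. Your main argument via the implicit function theorem is a different and more quantitative route: the formula $dt/d\tau = \mathbb{E}[|X-t|]/\bigl[(1-\tau)F(t)+\tau(1-F(t))\bigr]$ yields \emph{strict} monotonicity whenever $X$ is not a point mass at $t$, which the paper's qualitative argument does not deliver. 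One small simplification: $(1-\tau)F(t)+\tau(1-F(t)) = \tau + F(t)(1-2\tau) \ge \min(\tau,1-\tau) > 0$ for every $F$ and every $\tau\in(0,1)$, so your ``non-degenerate at $t$'' caveat on the positivity of $\partial G/\partial t$ is unnecessary. What the derivative route costs is regularity: when $F$ has an atom at $t$, the map $t\mapsto\mathbb{E}[(t-X)^+]$ is only convex with distinct left/right derivatives $F(t^-)$ and $F(t)$, so the standard $C^1$ implicit function theorem does not apply directly --- precisely the gap you flag yourself and close with the derivative-free alternative. In short: your alternative paragraph \emph{is} the paper's proof (stated somewhat more cleanly, since the paper glosses over the need to divide by a possibly vanishing integral and over the passage from two separate monotonicities to monotonicity of $\tau\mapsto t$); your main argument is a stronger but less general complement to it, and the two together are sound.
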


\begin{proof}
    We first need to consider that the \Cref{eq:expectile_orig} is actually a function. Such function is defined as:
    $$
    f(\tau) = t \text{ s.t. } (1-\tau) \int_{-\infty}^t (t-x) dF(x)  = \tau \int_t^{+\infty} (x-t) dF(x)
    $$
    We need to show that such a function is monotonic non-decreasing; thus, if $\tau$ increases, then the corresponding $f(\tau)$ cannot decrease.
    
    Dividing both sides by $(1-\tau)$ and then by $\int_t^{+\infty} (x-t) dF(x)$, we can rewrite \cref{eq:expectile_orig} in the following way:
    $$
    \frac{\tau}{1-\tau} = \frac{\int_{-\infty}^t (t-x) dF(x)}{\int_t^{+\infty} (x-t) dF(x)}
    $$
    We can observe that the left-hand side is monotonic with respect to $\tau$. Indeed its derivative is $g(\tau) ' = \frac{1}{(1-\tau)^2} \ge 0 \,\forall \tau \in (0,1)$. Furthermore, we can observe that the integrands on the right-hand side are non-negative. Considering that:
    \begin{align*}
    \text{given } f(x) &\ge 0, a \le \min(b_1,b_2): \\
    &\int_a^{b_1} f(x)dx \le \int_a^{b_2} f(x)dx \iff b_1 \le b_2,\\
    \text{given } f(x) &\ge 0, \max(a_1,a_2) \le b: \\
    &\int_{a_1}^{b} f(x)dx \le \int_{a_2}^{b} f(x)dx \iff a_1 \le a_2,
    \end{align*}
    we can thus conclude that the right-hand side is monotonic with respect to $t$.

    To conclude, since the left-hand side is monotonic with respect to $\tau$, that the right-hand side is monotonic with respect to $t$, and that the equality between the two sides has to be preserved, then we can conclude that $t$ is monotonic with respect to $\tau$ and vice versa.

\end{proof}

The same holds true for the formulation in \cref{eq:expectile_alpha_beta}. In particular, given two hyperparameters $\alpha \in \mathbb{R^+}, \beta \in \mathbb{R^+}$ Expectile Regression is the solution of an asymmetric loss, in particular, the Mean Squared Loss that relaxes one of the two sides of the function.
\begin{equation}
\label{eq:expectile}
    L^{\alpha, \beta}(f_\theta(x), y) = \frac{1}{Z}\begin{cases}
\alpha\, (y - f_\theta(x))^2  & \text{if $f_\theta(x) < y$} \\
\beta\, (y - f_\theta(x))^2 & \text{otherwise} 
\end{cases},
\end{equation}
with $Z = \max(\alpha, \beta)$, \rebuttal{$y$ being the target and $f_\theta(x)$ being the regressor network}. 
Thanks to \cref{thm:monotonic_expectiles}, we can prove that $\alpha$ and $\beta$ control the overestimation-underestimation bias, namely, \rebuttal{denoting by $f^{\alpha, \beta}(x)$ the optimal solution to $L^{\alpha, \beta}(f(x), y)$}:
\begin{enumerate}
    \item $\alpha = \beta$ reverts to Mean Squared Error (MSE), as the solution of $L^{c, c}$ for any $c > 0$ is exactly the MSE.
    \item $\alpha < \beta$ favors underestimation errors, as $f^{\alpha, \beta}(x) \le f^{c,c}(x)$, with $f^{\alpha, \beta}(x)$ solution of $L^{\alpha, \beta}$ and $f^{c,c}(x)$ solution of $L^{c, c}$ for any $c > 0$
    \item $\alpha > \beta$ favors overestimation errors, as $f^{\alpha, \beta}(x) \ge f^{c,c}(x)$, with $f^{\alpha, \beta}(x)$ solution of $L^{\alpha, \beta}$ and $f^{c,c}(x)$ solution of $L^{c, c}$ for any $c > 0$
\end{enumerate}
In \cref{fig:expreg}, we show the function learned by a fourth-degree polynomial minimizing the expectile loss varying $\alpha, \beta$ approximating the function $y = 0.1 x^3 + \epsilon, \,\, \epsilon \sim N(0, 8)$. Features were sampled uniformly from $U(-10, 10)$, and both features and targets were normalized in the range $\left[0,1\right]$. The parameters are initialized to $0$ and optimized using Adam\cite{kingma2014adam} with $0.001$ stepsize. Different $\alpha,\beta$ tradeoffs learn different expectiles of the distribution $p(y|x)$.
\begin{figure}[h!]
    \centering
    \vspace{-0.5cm}
    \includegraphics[width=0.8\columnwidth]{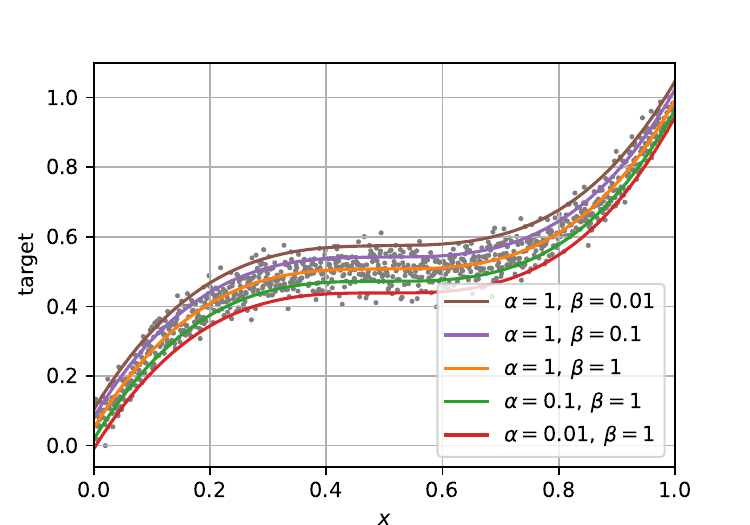}
    \vspace{-0.5cm}
    \caption{Third-degree polynomials learned optimizing the expectile loss for different values for $\alpha$ and $\beta$. }
    \vspace{-0.5cm}
    \label{fig:expreg}
\end{figure}

It can be noted that using two hyper-parameters $\alpha, \beta$ for the loss leads to an overparametrization. Indeed, $\tau$ was enough to parametrize it. Furthermore, since we are normalizing by $\max(\alpha, \beta)$, the two hyper-parameters are scale-invariant, and thus we can just consider $\alpha, \beta \in [0,1]$. However, for the sake of readability, we will keep the two separated even though we could have used a single one.\

The expectile loss then leads to the following objective for the DDPG algorithm for the optimization of the $Q$ function:
\begin{equation}
\label{eq:expectile_objective}
L(\theta) = \mathbb{E}[L^{\alpha, \beta}(Q_\theta(s,a), r(s,a) + \gamma Q_{\theta'}(s', \mu_{\phi'}(s')))].
\end{equation}

As mentioned earlier, we add a normalizing constant $Z$ in front of the equation in order to have a fair comparison between algorithms. From an optimization standpoint, it's equivalent to a change in the step size of the optimizer. Indeed, thanks to $Z$, the type of error we prefer to penalize, the one with the highest coefficient, has exactly $1$ as a constant in front, leading to an update that is equivalent to the original method. For the other type of error, on the other hand, the loss is multiplied by a constant $<1$, leading to a lower step size. This way, we can guarantee that the improvements shown by this proposal are due to the effectiveness of the loss and not by bigger step sizes induced by hidden constants in the loss.

\begin{corollary}[Asymptotic convergence of Q-learning with Expectile loss]
\label{cor:decay}
    Given a scheduled decay function $\lambda(t)$, such that $\lim_{t\rightarrow +\infty} \lambda(t) = 1$, we can directly reduce the induced bias by the expectile loss, by applying at each step:
    \begin{equation}
    \min(\alpha^{t+1}, \beta^{t+1}) \leftarrow \min(\alpha^t, \beta^t) + |\alpha^t - \beta^t| \cdot \lambda(t) .
    \end{equation}

    This ensures that eventually, no bias will be introduced by the expectile loss, as $\lim_{t \rightarrow +\infty} L^{\alpha, \beta}(f(x), y) = L^{1,1}(f(x), y)$, which is the original Bellman Optimality Equation update \cite{sutton2018reinforcement}.
\end{corollary}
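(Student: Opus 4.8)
The plan is to track the asymmetry gap $d_t := |\alpha^t - \beta^t|$ and show it decays to zero, after which the loss becomes symmetric and the induced bias disappears by the characterization already established in the excerpt. Writing $m^t := \min(\alpha^t,\beta^t)$ and $M^t := \max(\alpha^t,\beta^t)$, I first read the prescribed rule as raising only the minimum, $m^{t+1} = m^t + d_t\,\lambda(t)$, while leaving the maximum untouched, $M^{t+1} = M^t$. Assuming $\lambda(t)\in[0,1]$, a one-line check shows $m^{t+1} = m^t\bigl(1-\lambda(t)\bigr) + M^t\lambda(t) \le M^t$, so the updated minimum never overtakes the maximum and the labels ``min''/``max'' stay consistent from step to step.

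Next I would derive a clean recursion for the gap. Substituting the update into $d_{t+1} = M^{t+1}-m^{t+1}$ gives
\begin{equation}
d_{t+1} = M^t - m^t - d_t\,\lambda(t) = d_t\bigl(1-\lambda(t)\bigr),
\end{equation}
so that unrolling yields $d_t = d_0\prod_{s=0}^{t-1}\bigl(1-\lambda(s)\bigr)$. The crucial step is to show this product vanishes. Since $\lambda(t)\to 1$ with $\lambda(t)\in[0,1]$, the factors $1-\lambda(s)$ are nonnegative and tend to $0$; hence there exists $T$ with $1-\lambda(s)\le \tfrac12$ for all $s\ge T$, which bounds the tail of the product by $(\tfrac12)^{t-T}\to 0$. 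Therefore $d_t\to 0$, and because $M^t\equiv M^0$ is constant, both $\alpha^t$ and $\beta^t$ converge to the common value $c:=\max(\alpha^0,\beta^0)$.

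To conclude, I would invoke the characterization stated right after \Cref{thm:monotonic_expectiles}: the normalized loss satisfies $L^{c,c}=L^{1,1}$ for every $c>0$ (the symmetric/MSE case), so $L^{\alpha^t,\beta^t}\to L^{1,1}$ pointwise as $t\to\infty$. The monotonicity of expectiles in \Cref{thm:monotonic_expectiles} then ties the residual estimation bias to $\tau$, which—by comparing \cref{eq:expectile_orig} with \cref{eq:expectile_alpha_beta}—equals $\beta/(\alpha+\beta)$ and tends to $\tfrac12$ as $d_t\to 0$; hence the expectile approaches the mean and no bias remains, recovering the standard Bellman-optimality update.

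I expect the only genuinely delicate point to be the product convergence: $\lambda(t)\to 1$ by itself does not force $d_t\to 0$ unless one also knows $\lambda(t)$ stays in $[0,1]$ (or at least eventually below $1$), which is exactly what guarantees both the nonnegativity of the factors and the geometric tail bound. Everything else—the gap recursion and the reduction to the symmetric loss—is bookkeeping layered on top of results already proved in the excerpt; the one point worth flagging is that the statement establishes that the \emph{loss} becomes unbiased in the limit, not that the Q-iterates themselves converge, which would additionally require the usual stochastic-approximation hypotheses.
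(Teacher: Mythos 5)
Your proposal is correct, and it supplies precisely the algebra that the paper itself omits. The paper offers no formal proof of \cref{cor:decay}: its entire justification is the remark immediately following the statement, namely that since the decay drives the loss to the symmetric case and since Q-learning converges for any policy with support over the optimal actions (guaranteed by the Gaussian exploration noise), the convergence is ``trivial.'' Your derivation --- the recursion $d_{t+1} = d_t\bigl(1-\lambda(t)\bigr)$, the vanishing product argument, and the identification of the common limit $c=\max(\alpha^0,\beta^0)$ together with $L^{c,c}=L^{1,1}$ via the normalization $Z=\max(\alpha,\beta)$ --- is exactly the content the paper treats as self-evident, so your route is the rigorous completion of the same idea rather than a different one. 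Two remarks. First, your flagged worry about needing $\lambda(t)\in[0,1]$ is slightly overcautious: since the gap is by definition $d_t=|\alpha^t-\beta^t|$, the recursion is really $d_{t+1}=d_t\,|1-\lambda(t)|$, and $\lambda(t)\to 1$ alone already forces the product to vanish; the restriction $\lambda(t)\le 1$ is only needed so that the updated parameter remains the minimum (otherwise the min/max labels swap and the sign of the residual asymmetry oscillates while its magnitude still decays to zero). Second, your closing caveat marks a genuine difference in emphasis from the paper: you establish only that the loss tends to the symmetric MSE, and explicitly decline to conclude convergence of the Q-iterates, whereas the paper's remark leans on Q-learning's classical convergence guarantees (exploration noise, infinite time) to assert ``theoretically sound convergence'' of the algorithm itself --- a step that, as you correctly note, would require the usual stochastic-approximation hypotheses and is asserted rather than proved in the paper.
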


Since Q-learning is guaranteed to converge starting from any policy that has support over all actions part of the optimal policy, and it's assumed to have infinite time, since DDPG algorithms during training add as exploration noise $\epsilon \sim N(0, \sigma^2_{exp})$, it's trivial to see how such decay allows for a theoretically sound convergence.

\begin{remark}[Consistency and Asymptotic Normality of Empirical Expectiles]
\label{thm:asymptotic_expectiles}
Let \(\{X_i\}_{i=1}^n\) be i.i.d.\ samples from a distribution with cumulative distribution function \(F\), and let \(t_n\) be the empirical \(\tau\)-expectile obtained by minimizing the sample counterpart of Equation \ref{eq:expectile_orig}. Then, under standard regularity conditions for M-estimators, \(t_n\) is a consistent estimator of the true \(\tau\)-expectile \(t^*\) (i.e., \(t_n \xrightarrow{p} t^*\)), and it satisfies a central limit theorem:
\[
\sqrt{n}(t_n - t^*) \xrightarrow{d} \mathcal{N}\left(0,\; \left[\Psi(t^*, \tau)\right]^{-2}\right),
\]
where \(\Psi(t, \tau)\) is the derivative of the population objective function (\cref{eq:expectile_orig}) associated with the expectile definition at \(t\).
\end{remark}

\begin{corollary}[Monte Carlo Estimation of Expectile Function]
\label{cor:montecarlo}
When the integrals in Equation \ref{eq:expectile_orig} are approximated via Monte Carlo sampling,
\[
\hat{t} = \arg\min_{t} \sum_{i=1}^n \omega_i \cdot \ell_\tau(t, X_i),
\]
where \(\ell_\tau(\cdot,\cdot)\) represents the asymmetric loss defining the expectile, and \(\omega_i\) are weights (often uniform, \(\omega_i = \frac{1}{n}\)), then \(\hat{t}\) inherits the consistency and asymptotic normality as described in Theorem \ref{thm:asymptotic_expectiles}. Consequently,
\[
\sqrt{n}(\hat{t} - t^*) \xrightarrow{d} \mathcal{N}\left(0,\; \left[\Psi(t^*, \tau)\right]^{-2}\right).
\]
\end{corollary}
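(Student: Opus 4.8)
The plan is to recognize that the Monte Carlo estimator $\hat{t}$ of \cref{cor:montecarlo} is, for uniform weights $\omega_i = 1/n$, nothing other than the empirical $\tau$-expectile $t_n$ already analysed in \cref{thm:asymptotic_expectiles}: replacing the population integrals of \cref{eq:expectile_orig} by their sample counterparts is precisely the effect of minimising $\sum_i \omega_i\, \ell_\tau(t, X_i)$. Consequently the corollary demands no fresh limit theorem but an \emph{identification} argument. First I would verify that the asymmetric loss $\ell_\tau$ is the M-estimation criterion whose stationarity condition reproduces \cref{eq:expectile_orig}. Writing $\ell_\tau(t, x) = |\tau - \mathbf{1}(x < t)|\,(x-t)^2$ and differentiating in $t$ yields a score $\psi_\tau(t, x)$ that is continuous and piecewise linear in $t$; imposing $\mathbb{E}[\psi_\tau(t, X)] = 0$ returns exactly the defining equation of the $\tau$-expectile, so the population root is $t^*$ and the empirical root is $\hat{t} = t_n$.

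With the estimators identified, the second step is to run the standard M-estimator argument that underlies \cref{thm:asymptotic_expectiles}. Consistency $\hat{t} \xrightarrow{p} t^*$ follows from the strict convexity of $t \mapsto \mathbb{E}[\ell_\tau(t, X)]$ together with a uniform law of large numbers for the empirical criterion, which forces the argmin of the sample objective to the argmin of the population objective. For asymptotic normality I would expand the estimating equation $\frac{1}{n}\sum_i \psi_\tau(\hat{t}, X_i) = 0$ about $t^*$ by the mean value theorem: the leading term $\frac{1}{\sqrt{n}}\sum_i \psi_\tau(t^*, X_i)$ satisfies a central limit theorem because $\mathbb{E}[\psi_\tau(t^*, X)] = 0$, while the coefficient multiplying $\sqrt{n}(\hat{t} - t^*)$ converges in probability to $\Psi(t^*, \tau)$, the derivative of the population objective at $t^*$. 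Slutsky's theorem then delivers $\sqrt{n}(\hat{t} - t^*) \xrightarrow{d} \mathcal{N}(0, [\Psi(t^*, \tau)]^{-2})$, matching the statement.

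A technical point worth isolating is that each summand $\ell_\tau(\cdot, X_i)$ is convex and $C^1$ but not twice differentiable at the kink $t = X_i$; nevertheless the population objective $\mathbb{E}[\ell_\tau(t, X)]$ is twice continuously differentiable whenever $F$ is non-atomic, so $\Psi(t^*, \tau)$ is well defined and the mean-value expansion is legitimate at the population level even though the individual terms are not smooth. The step I expect to be the main obstacle is the general, non-uniform weight case: the variance $[\Psi(t^*, \tau)]^{-2}$ is the form one obtains under uniform weights, and for arbitrary $\omega_i$ one must first ensure the weighted empirical measure converges to $F$ (for instance $\sum_i \omega_i = 1$ with a weighted law of large numbers) and then track how the $\omega_i$ enter both the variance of the score and the limiting derivative. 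I would handle this by imposing a Lindeberg-type negligibility condition on the weights that preserves the same limiting distribution, or alternatively by stating the general sandwich variance and observing that it collapses to $[\Psi(t^*, \tau)]^{-2}$ precisely when the weights are uniform.
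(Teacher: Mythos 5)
Your proposal is correct in outline, but it is worth stating plainly that the paper contains no proof of \cref{cor:montecarlo}: the result is asserted and the reader is referred to \cite{newey1987asymmetric,knight1998limiting}, where precisely the argument you sketch is carried out. Your write-up therefore supplies the content the paper outsources, and the pieces are the right ones: the identification step (the stationarity condition of the asymmetric loss $\ell_\tau$ reproduces \cref{eq:expectile_orig}, so the weighted Monte Carlo minimizer with uniform weights \emph{is} the empirical expectile of \cref{thm:asymptotic_expectiles}), strong convexity of the population criterion (its second derivative is bounded below by $2\min(\tau,1-\tau)>0$ for $\tau\in(0,1)$), the convexity-based argmin-convergence argument for consistency, and the score expansion plus Slutsky for asymptotic normality. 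Your observations that the individual summands are $C^1$ but not $C^2$ at the kink (while the population objective is smooth for non-atomic $F$), and that general weights $\omega_i$ require an additional negligibility condition, address gaps the paper passes over silently — the corollary as stated quietly assumes the uniform-weight case.

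One step deserves more care than you give it. Carried out honestly, your expansion yields the sandwich variance $\mathbb{E}\bigl[\psi_\tau(t^*,X)^2\bigr]\,/\,\bigl[\Psi(t^*,\tau)\bigr]^{2}$: the central limit theorem applied to $n^{-1/2}\sum_i \psi_\tau(t^*,X_i)$ puts the score variance in the numerator, and Slutsky does not make it disappear. This collapses to $\bigl[\Psi(t^*,\tau)\bigr]^{-2}$ only under a normalization of the score, or a definition of $\Psi$ that absorbs it, neither of which holds in general. Your closing claim that Slutsky delivers a limit ``matching the statement'' therefore glosses over this numerator. The looseness originates in the paper's own statement of \cref{thm:asymptotic_expectiles}, which the corollary inherits, so you are not contradicting the paper; but a self-contained proof should either state the sandwich form (as \cite{newey1987asymmetric} does) or make the normalization explicit, rather than assert agreement with a formula that the argument, as run, does not produce.
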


For more information and formal proofs of \cref{thm:asymptotic_expectiles} and \cref{cor:montecarlo} refer to \cite{newey1987asymmetric,knight1998limiting}.

\begin{remark}
These theoretical results validate the use of the expectile loss in place of traditional MSE-based updates by confirming that the estimator converges to the correct level of under- or over-estimation specified by the parameters \(\alpha\) and \(\beta\) (or equivalently \(\tau\)). These properties also justify the robustness and adaptability of expectile-based methods in function approximation and value estimation within reinforcement learning frameworks.
\end{remark}

To conclude, \cref{thm:monotonic_expectiles} shows how the proposed loss can directly manipulate the wanted overestimation/underestimation, while \cref{cor:decay} show how to prove the convergence of the proposed method, ensuring no bias will be introduced at convergence. Finally, \cref{thm:asymptotic_expectiles} and \cref{cor:montecarlo} show that the expectile loss \cref{eq:expectile} allows a consistent and asymptotic convergence to the true expectile, allowing \cref{thm:monotonic_expectiles} to hold also in a setting where Monte Carlo estimation is employed.

\begin{figure*}[h!]
    \centering
    \includegraphics[width=\textwidth]{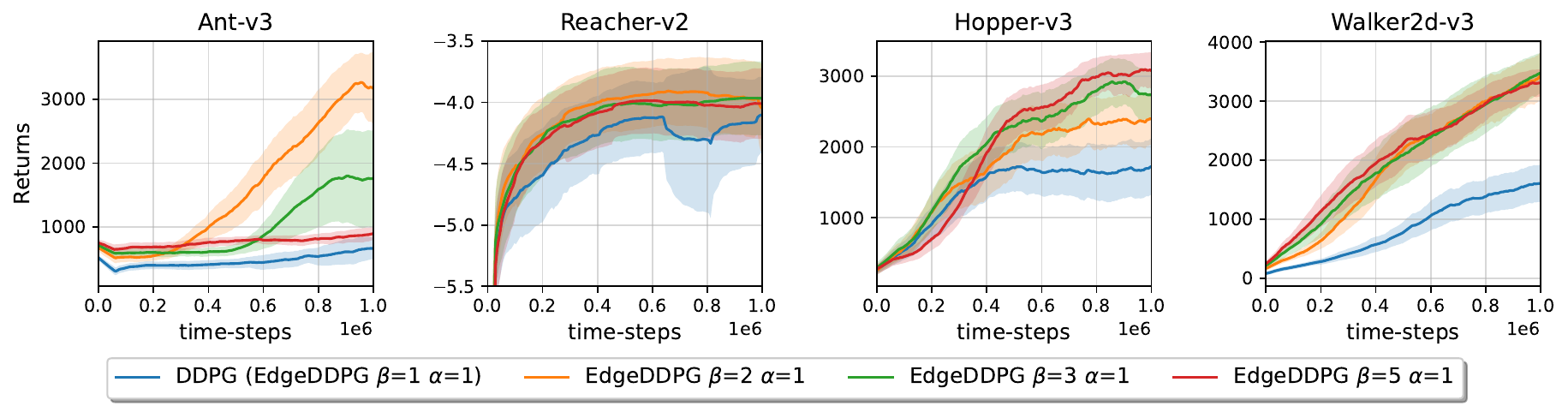}
    \vspace{-0.5cm}
    \caption{Training progress curves for continuous control tasks in OpenAI Gym, showing the effect of different choices of $\alpha, \beta$ in EdgeDDPG. Plots and shaded areas indicate mean and standard deviation, respectively from evaluation across 10 trials. Benchmarks were performed on 10 random seeds for simulator and network initializations. Curves are smoothed uniformly for visual clarity. }
    \vspace{-0.5cm}
    \label{fig:edgeddpg}
\end{figure*}
To demonstrate the effectiveness of this novel loss formulation, we equip DDPG with this new objective. We refer to this new updated version of the algorithm as Edge Deep Deterministic Policy Gradient (EdgeDDPG). This new algorithm shares the same memory footprint as DDPG, and adds a small additional cost to the update step due to the need to check whether the residuals $Q(\mu(s),a) - (r + \gamma Q(s',\mu(s')))$ are positive or negative, to then apply the correct corresponding loss. However, the difference in computational cost is almost indistinguishable from our tests.\\
A complete description of the algorithm can be found in \cref{sec:code-expddpg}. The new objective \cref{eq:expectile_objective} lays the ground for the final and proposed algorithm, EdgeD3, which will be discussed in \Cref{sec:smoothing}.\\
It can be seen how the modification, since its simplicity, consists in a small change in the definition of the DDPG algorithm. However, nonetheless, the resulting performance gains are noteworthy. Indeed, even though the two algorithms have the same computational and memory footprint, it can be seen from \cref{fig:edgeddpg} how such small modification allows the algorithm to improve in all tasks, to the point of going from non-converging in \textit{Ant} at all, to an actual policy learning.\\
\Cref{fig:edgeddpg} furthermore shows the tradeoff between overestimation and underestimation, highlighting how both of them can be detrimental. Indeed, it shows how preferring underestimation over overestimation has some diminishing returns eventually, so pushing for a strong underestimation bias can be as detrimental as accepting the overestimation bias of Q-learning. In \cref{sec:app_ablation_trade_off}, we report further tests of the algorithm on additional tasks and with different choices of $\alpha$ and $\beta$.\\
Even though it is computationally cheaper and brings very substantial performance improvements, a quick crosscheck between results in this paper will show how this new simple modification to the DDPG still struggles to reach state-of-the-art performances.

\section{Smoothing the optimization landscape}
\label{sec:smoothing}
Actor-critic RL algorithms for continuous control algorithms are mainly composed of 2 components: the Q-function $Q_\theta$ and the policy network $\mu_\phi$. The 
optimization criterion of the latter one, can be seen as gradient ascent procedures on the former one. However, differently from the usual optimization settings where we assume the function to be stationary throughout the optimization procedure\cite{ruder2016overview}, in DDPG-like algorithms $Q_\theta$ changes over time.
Due to this property, and due to the fact that is a conditional optimization, as it depends on the state $s$, it's very hard to take advantage of new optimization techniques \cite{kingma2014adam}, such as adaptive stepsizes and momentum. Furthermore, recently, adversarial attacks \cite{goodfellow2014explaining} have shown how the landscape of a neural network is far from being smooth, and that small changes in the input, such as the action computed by $\mu_\phi$ fed to $Q_\theta(s,a)$, can lead to big changes in the output, in our case in the Q-estimate.
For this reason, is very important to tackle this ill-conditioning of the Q-network $Q_\theta$ as also addressed in \cite{fujimoto2018td3, nachum2018smoothed}. One such way, applied in GANs, is to build and regularize the final network in such a way that it is almost 1-Lipschitz \cite{miyato2018spectral}. Another way, also used in GANs, is to penalize the gradient. In order to apply such a method in the DDPG case would lead to the following loss:
\begin{equation}
L(\theta) = (r + \gamma Q_{\theta'}(s', \mu_{\phi'}(s')) - Q_\theta(s,a))^2 + \xi ||\nabla_a Q(s,a)||^2
\end{equation}
Even though this formalization can work, it is very computationally expensive, as it requires first estimating the gradient of the Q-estimate with respect to the action, computing the norm of it, and then doing the full loss gradient update. However, \cite{rifai2011contractive} shows how penalizing the gradient of the output, in our case the Q-estimate, to the input, in our case the action, is equivalent to adding noise.
Even though the two are theoretically similar, they have very different computational costs. With this technique, in our case, this implies solving the following optimization objective:
\begin{equation}
L(\theta) = (\mathbb{E}_{\epsilon \sim p(x)}[r + \gamma Q_{\theta'}(s', \mu_{\phi'} + \epsilon(s'))] - Q_\theta(s,a))^2.
\end{equation}
This new formulation, applied to the EdgeDDPG algorithm, equates to solving the expectation over the Expectile loss:
\begin{equation}
\label{eq:noisyexploss}
L(\theta) = L^{\alpha, \beta}(Q_\theta(s,a), \mathbb{E}_{\epsilon \sim p(x)}[r + \gamma Q_{\theta'}(s', \mu_{\phi'} + \epsilon(s'))]).
\end{equation}
Indeed, $p(x)$ wants to be a smoothing function, having $y = \int p(\epsilon) [r + \gamma Q_{\theta'}(s', \mu_{\phi'} + \epsilon(s'))] d\epsilon$ so it's suggested to choose a distribution $p$ centered in $0$ and symmetric, in order to avoid introducing any bias.

In addition to the smoothing, for the optimization to be effective, we need $Q_\theta(s, \mu_\phi(s)) \approx Q^*(s, \mu_\phi(s))$ in a neighborhood of $\mu_\phi(s)$, so that \\ $\nabla_{\mu_\phi(s)} Q_\theta(s, \mu_\phi(s)) \approx \nabla_{\mu_\phi(s)} Q^*(s, \mu_\phi(s))$. For this reason, we can take advantage of the delayed update introduced in \cite{fujimoto2018td3}. This way, not only do we offer more time to the $Q_\theta$ to improve, but we also save computational resources, skipping $k-1$ updates every $k$, where $k$ is the frequency of actor updates.

Combining EdgeDDPG, the delayed update from \cite{fujimoto2018td3}, and the noisy estimate from \cref{eq:noisyexploss}, we obtain Edge Delayed Deep Deterministic Policy Gradient.

A complete description of EdgeD3 can be found in \cref{alg:edged3}. Indeed, it is a small modification on the proposed EdgeDDPG algorithm with almost no additional costs: the delayed policy update saves computation, and the noisy update, in our case, is estimated with One-Sample Monte Carlo (OSMC)\cite{song2021train}. Thus, the only additional cost is brought by the noise generation, which is almost negligible.

\begin{algorithm}[h!]
   \caption{Edge Delayed Deep Deterministic Policy Gradient (EdgeD3)}
   \label{alg:edged3}
\begin{algorithmic}[5]
    \STATE Given $\alpha, \beta, \tau_1, \tau_2$ and $\lambda(t)$
   \STATE Initialize critic $Q_{\theta}$, and actor $\mu_\phi$ networks
   \STATE Initialize target networks $\theta' \leftarrow \theta, \phi' \leftarrow \phi$
   \STATE Initialize $k$ for the actor update frequency
   \STATE Initialize $p(x)$ for the target input noise
   \STATE Initialize replay memory \textbf{$\mathcal{B}$}
   \STATE Initialize $t=0$
   \REPEAT
       \REPEAT
            \STATE $t \leftarrow t + 1$
            \STATE Select action with exploration noise $a \sim \mu_{\phi'}(s) + \omega$, $\omega \sim \mathcal{N}(0, \sigma)$ and observe $r$ and  $s'$.
            \STATE $d=\begin{cases} 1 \text{ if } s'\text{ is terminal} \\ 0 \text{ otherwise} \end{cases}$
            \STATE Store $(s,a,r,s',d)$ tuple in $\mathcal{B}$
            \STATE Sample batch of $N$ tuples $(s,a,r,s',d)$ from $\mathcal{B}$
            \STATE $y = \mathbb{E}_{\epsilon \sim p(x)}[r + \gamma Q_{\theta'}(s', \epsilon + \mu_{\phi'}(s')) \cdot (1-d)]$ 
            \STATE $\nabla L(\theta) = \nabla_\theta N^{-1} \Sigma L^{\alpha,\beta}(y, Q_{\theta}(s,a))$ [\cref{eq:expectile}]
            \STATE Update $Q_\theta$ via GD using $\nabla L(\theta)$
            \IF{$t$ mod $k$}
                \STATE Update $\phi$ by deterministic policy gradient:
                \STATE $\nabla_\phi J(\phi) = \frac{1}{N} \Sigma \nabla_a Q_{\theta}(s,a) |_{a=\mu_\phi(s)} \nabla_\phi \mu_{\phi}(s)$
                \STATE Update target networks:
                \STATE $\theta' \leftarrow \tau_1 \theta + (1-\tau_1) \theta'$
                \STATE $\phi' \leftarrow \tau_2 \phi + (1-\tau_2) \phi'$
            \ENDIF
            
       \UNTIL{$d$ is false}
       \STATE $\min(\alpha, \beta) \leftarrow \min(\alpha, \beta) + |\alpha - \beta| \cdot \lambda(t) $
   \UNTIL{$t < T$}
\end{algorithmic}
\end{algorithm}

In Fig. \ref{fig:edgeddpg_v_edged3}, we compare the EdgeDDPG algorithm with the EdgeD3 algorithm.

\begin{figure*}[h!]
    \centering
    \includegraphics[width=\textwidth]{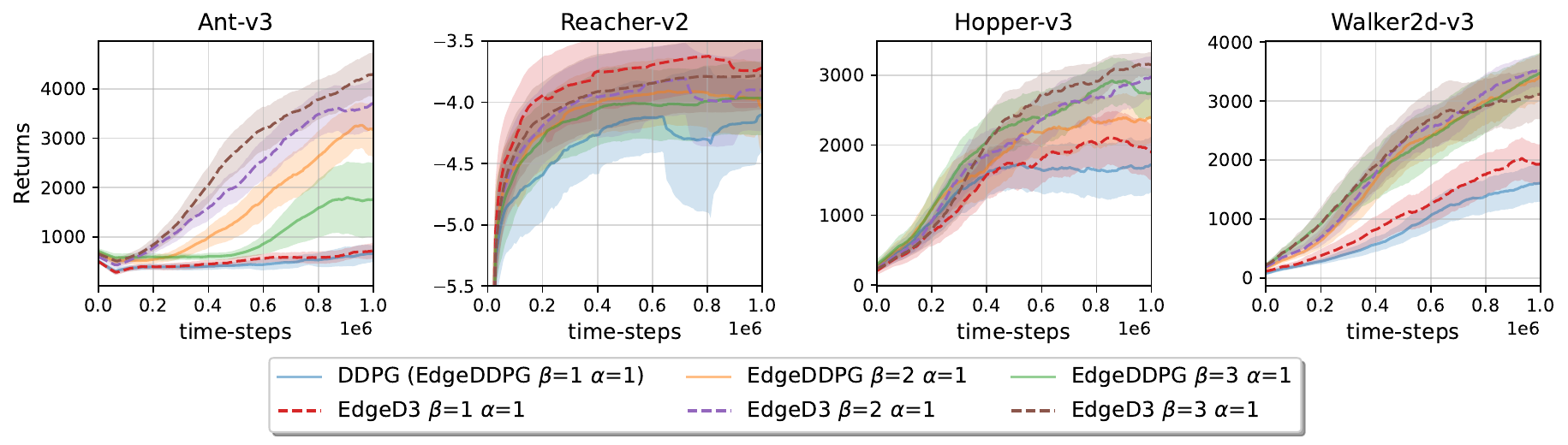}
    \caption{Training progress curves for continuous control tasks in OpenAI Gym, showing the effect of different choices of $\alpha, \beta$ in EdgeD3 compared to EdgeDDPG. Plots and shaded areas indicate mean and standard deviation, respectively, from evaluation across 10 trials. Benchmarks were performed on 10 random seeds for simulator and network initializations. Curves are smoothed uniformly for visual clarity. }
    \label{fig:edgeddpg_v_edged3}
\end{figure*}
\section{Experiments}
\label{sec:experiments}
To evaluate our proposed algorithm, we benchmark its performance on the suite of Mujoco \cite{mujoco}, a set of robotic environments aimed for continuous control, with no change to the environment itself or the reward to improve reproducibility. \\ 
For our implementation of all the algorithms, we used a feed-forward network composed of 3 layers of 256 neurons, optimized using Adam optimizer \cite{kingma2014adam} with $3\cdot 10^{-4}$ stepsize for a fair comparison. For EdgeD3, TD3 and SAC we considered $k=2$, thus the actor is updated every $2$ updates of the Q-functions.
For the target smoothing distribution, we use the proposed clipped Gaussian distribution also used in \cite{fujimoto2018td3}, and for the exploration policy, we used a Gaussian distribution $N(0, 0.1)$ for all the algorithms, apart from SAC \cite{haarnoja2018sac} where we used the learned posterior distribution. More technical details for reproducibility can be found in \cref{sec:app_reproducibility}. In the rest of this section, we firstly compare the memory usage of the proposed algorithm with state of the art, secondly, their GPU-time utilization, and thirdly, we compare learning performance on the Mujoco benchmarks. \rebuttal{All results repoted come from networks trained using \cref{eq:expectile} with $\alpha=1, \beta=2$, thus showing that very limited hyperparameter tuning is needed, and that a performant initial guess exists.}

\subsection{Resource use comparison}
\label{sec:resource_use}
The proposed algorithm EdgeD3 aims at being a step towards RL-algorithms that are suited for Edge Computing, which lately is gaining a lot of attention thanks to its natural ability to be scalable and highly privacy-preserving, as all the computation is done on-device. Such a setting, however, requires the use of the least amount of computational resources as well as memory resources. Indeed, edge computing algorithms aim at achieving the following characteristics\cite{carvalho2021edge, deng2020edge, shi2016edge}:
\begin{itemize}
    \item minimal CPU usage: the processing power of an edge device is limited in order to keep the cost of the device low;
    \item minimal memory usage: as per the CPU usage, the memory is also limited for production cost;
    \item minimal computation: many edge devices, such as smartphones, are powered by a battery, and having CPU-intensive algorithms leads to shorter battery duration but also shorter overall life of the battery due to overheating.
\end{itemize}
The comparisons have been carried out on a computer equipped with an AMD Ryzen Threadripper 1920X 12-Core Processor, an NVIDIA Titan V with 12Gb of memory and 128Gb of RAM, running Ubuntu $18.04.6$. During the comparison, all unnecessary processes were properly killed, the update routine was paused, and no other major process was running.\\

\subsubsection{RAM usage}
The most popular algorithms that build upon DDPG, implementing the DPG estimator\cite{silver2014deterministic_policy_grad}, are SAC and TD3, which both utilize the CDQ mechanism, exploiting an ensemble of two Q functions to estimate the Q-learning target.
However, having this additional function also implies having another additional target function, thus effectively having four networks in total to maintain in memory.
Such additional cost is justified by the improvements in performance. However, if memory is a concern, such as for edge computing, this additional cost might not be worth it.
For this reason, we will compare the algorithms by their memory consumption. This section, in conjunction with \cref{sec:experiments}, shows we can achieve state-of-the-art performances with much less memory required, thanks to the new loss formulation.

Since all algorithms share the same Replay Buffer size, the only factor influencing the footprint is the number of networks that the algorithms require. 
In \cref{table:memcomp}, we show the percentage of decrease in peak memory usage of 10.000 update steps of each algorithm using 10-dimensional fake Gaussian noise data generated at the beginning, thus removing the replay buffer from the memory consumption, and also removing the environment, which might cause some sharp increases in memory usage biasing the estimates. The test was carried out using the CPU for the computation so that the memory used by the process was not split across RAM and GPU memory. 

\begin{table}[h]
\vspace{-0.5cm}
\caption{Comparison of percentage of peak additional memory used compared to EdgeD3.}
\vspace{-0.5cm}
\label{table:memcomp}
\vskip 0.1in
\begin{center}
\begin{small}
\begin{sc}
\begin{tabular}{|c|c|}
\hline
\textbf{Algorithm} & \textbf{\% of RAM used compared to ExpD3} \\
\hline\hline
DDPG  & $-1.2\%$ \\
TD3   & $+29.3\%$ \\
SAC   & $+31.1\%$ \\
\hline
\end{tabular}
\end{sc}
\end{small}
\end{center}
\vspace{-0.5cm}
\end{table}
\begin{figure*}[h!]
    \centering
    \includegraphics[width=\textwidth]{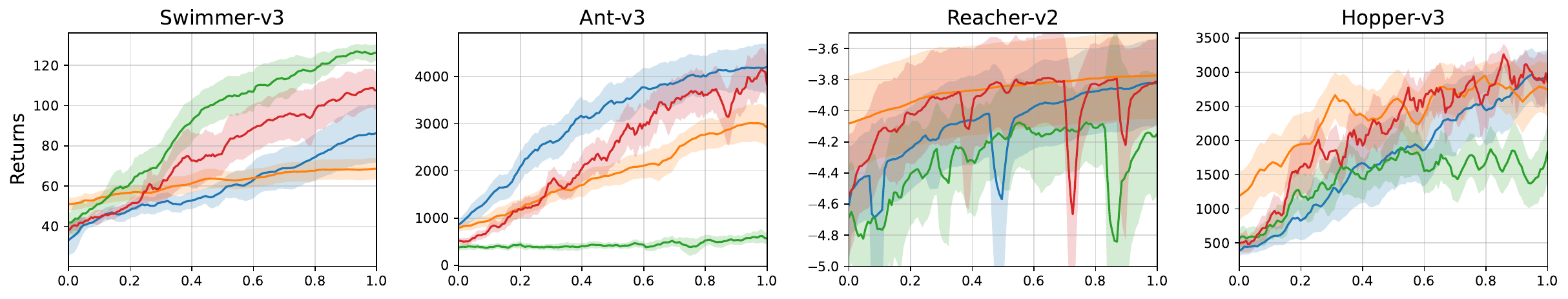}
    
    \includegraphics[width=0.75\textwidth]{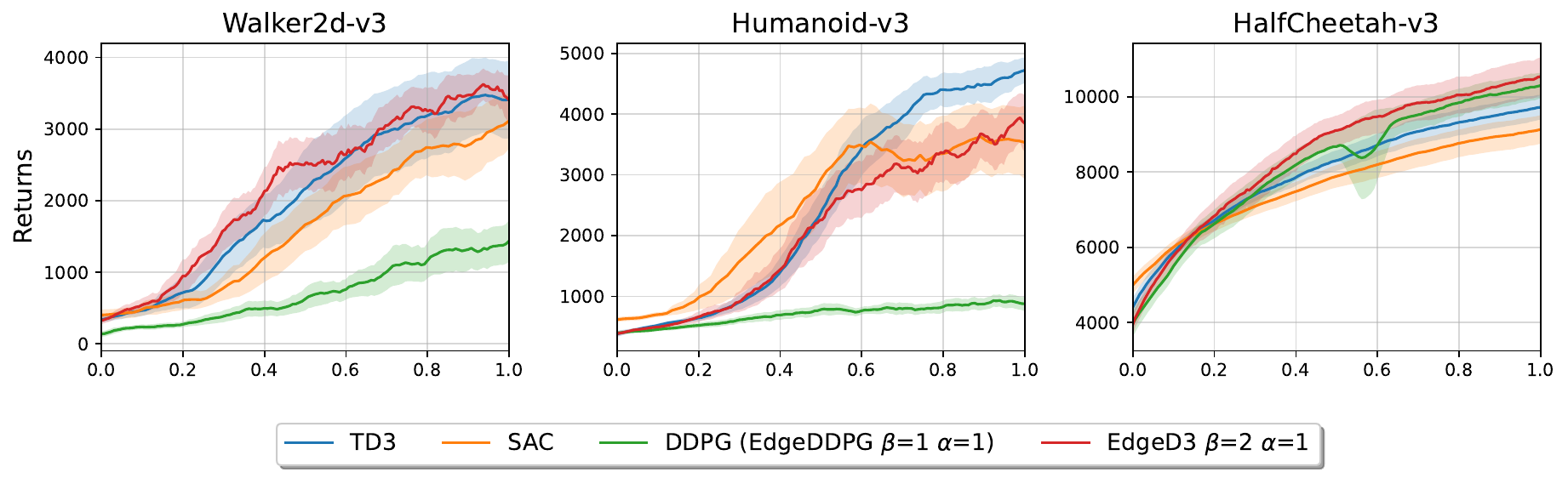}
    \caption{
    Comparing EdgeD3 with baselines in continuous control tasks. 
    Plots are from 10 random seeds for simulator and network initializations, smoothed for visualization. Evaluations of Return are performed every 5000 time steps; plots show a standard deviation over 10 episodes.}
    \label{fig:benchmarks_edged3}
\end{figure*}

\subsubsection{GPU-time comparison}
TD3 and SAC are popular algorithms that build upon DDPG. They both exploit an ensemble of two Q estimates for the calculation of the Q-learning target.
During training however, we are required to train them independently, and we need to pay the computational expenses of the additional Q-function.
Instead, EdgeD3 requires the same network as the original DDPG algorithm, avoiding such additional computational costs.

In order to have a better sense of the improvement brought by EdgeD3 for edge computing from a computational point of view, we compared the different algorithms to showcase the various footprints.
The hyper-parameters used for the comparison are the default ones used in the respective papers. The only exception to this is for Soft Actor-Critic, which originally did not use the delayed update. For a fair comparison, we report both the computational time of SAC with the delayed update (SAC$^d$) and without (SAC$^o$). For the benchmarks, however, we report the original SAC implementation, which updates the actor at every step, together with the Q-functions.

Since all have a Replay Buffer $\mathcal{B}$, and that all at inference time have roughly the same cost, as the state is forwarded through the policy network $\mu_\phi$ and some Gaussian noise is added to the network prediction, the only component that can vary the computational cost, is the training loop. For this reason, we create random data from a $10$-dimensional Gaussian distribution and use that as fake states for the forward passes. We then proceed to run $10000$ training steps for algorithms and repeat it over $10$ different seeds.
The results of the comparisons are reported in \cref{fig:gpucomp}. For clarity, we also report the improvements in \cref{table:gpucomp}.
Indeed, it's evident how the proposed method, compared to the other methods, requires 30\% less computing time since either they lack the delayed update or they require the update of 2 Q-functions.

\begin{table}[h]
\caption{Comparison of GPU-time. Details on the hardware can be found in \cref{sec:resource_use}. Between parenthesis is reported the percentage of time saved by EdgeD3 compared to the various methods.}
\label{table:gpucomp}
\vskip 0.15in
\begin{center}
\begin{small}
\begin{sc}
\begin{tabular}{|c|c|}
\hline
\textbf{Algorithm} & \textbf{Time}  \\
\hline\hline
EdgeD3 & $214.0\pm 7.1$ms  \\
DDPG   & $285.5 \pm 7.4$ms ($-25.0\%$) \\
TD3    & $308.2 \pm 2.7$ms ($-30.5\%$) \\
SAC$^d$    & $320.90 \pm 3.6$ms ($-33.3\%$) \\
SAC$^o$    & $492.91 \pm 2.9$ms ($-56.8\%$) \\
\hline
\end{tabular}
\end{sc}
\end{small}
\end{center}
\end{table}

\begin{figure}[t]
    \centering
    \includegraphics[width=0.8\columnwidth]{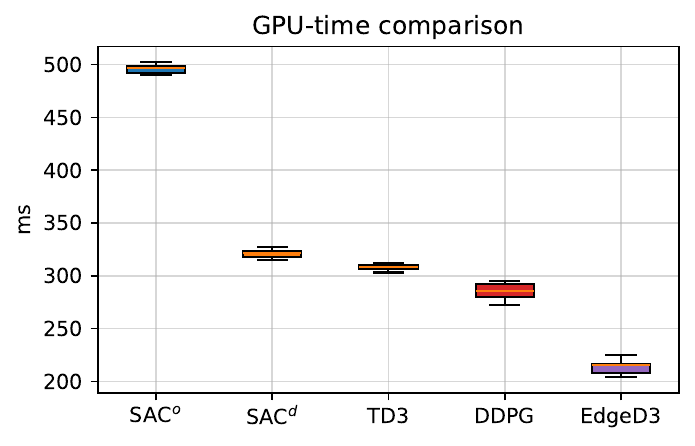}
    \caption{GPU time for one step of training loop for the different algorithms averaged over $10000$ steps, repeated over 10 different seeds. Details on the hardware can be found in \cref{sec:resource_use}}
    \label{fig:gpucomp}
\end{figure}

However, it has to be noted that an important aspect cannot be captured by the memory and GPU usage tests carried out. Edge devices have very limited memory, and part of the computation required to make an algorithm work is to have the networks loaded in memory. However, for the sake of the experiments, we have assumed that all the networks could be loaded in memory all at once, which is not guaranteed in an edge scenario. For this reason, the reported results should be considered as a best-case scenario comparison.

\subsection{\rebuttal{Comparison to state-of-the-art}}
For the comparison with other algorithms, as reported at the beginning of the section, we will use the Mujoco suit. For reproducibility, we used the same criterion used by the authors TD3 \cite{fujimoto2018td3}, having a bootstrap phase at the beginning of the learning. Each task ran for $1$ million steps and was evaluated every $5000$ step on 10 different environment initializations per evaluation. The reported results are also averaged over 10 different independent learning with a different seed each for different environments and network initializations.

We compare our proposed algorithm to TD3 \cite{fujimoto2018td3}, SAC \cite{haarnoja2018sac}, and the original DDPG algorithm, following the small tweaks proposed by the authors of TD3 \cite{fujimoto2018td3} for their comparison.

However, in order to have a fair comparison, instead of comparing the algorithms based on the environment steps, we compare them based on their training time. For this reason, since it's the cheapest across the pool, we first run the EdgeD3 algorithm in all tasks, keeping track of the wall-clock time required to do so for each environment. Then, we proceed to run all the other algorithms for the same amount of time in order to take into consideration the additional computational cost brought by the double Q-function update shown in \cref{table:gpucomp}.

The final evaluations are reported in \cref{fig:benchmarks_edged3}. In \cref{table:finalcomp}, we report the best score achieved by the various algorithms, allowing each the same amount of wall-clock time. In bold, we report the best algorithms, whereas the underlined ones are the two best.

It can be seen how EdgeD3 is consistently part of the two best algorithms. Furthermore, it matches and even surpasses performances of state-of-the-art methods while having a significantly smaller memory footprint, as reported in \cref{table:memcomp}. Indeed, it can also be seen that is not perfect, as it still struggles on very complex tasks such as Humanoid. On the other hand, its memory-wise competitor, DDPG, is highly outperformed by EdgeD3 in almost all the tasks, even going from a non-convergence regime to a state-of-the-art policy.

\subsection{\rebuttal{Real world edge evaluation}}
\label{sec:real_world_edge}

\rebuttal{To further validate the applicability of EdgeD3 beyond simulated environments, we conducted a set of real-world edge evaluations on a custom-built TurtleBot platform equipped with a 2D LiDAR sensor and an onboard Raspberry Pi3B device. The robot is a differential drive system, its control input $u_t$ is composed by the target linear and angular velocities, which are tracked by a low-level controller. This hardware configuration reflects a prototypical edge robotics scenario, where both compute and memory resources are constrained.}

\rebuttal{We evaluated the proposed algorithm on two robotic navigation tasks. For both settings, the state space, action space, and reward functions are shared.
The state space consists of a downsampled vector of laser scan measurements combined with the linear and angular velocities of the robot:}
\begin{equation}
s_t = \begin{bmatrix} d_1 & d_2 & \ldots & d_n & v_t & \omega_t \end{bmatrix}^T,
\end{equation}
\rebuttal{where \(d_i\) are evenly sampled laser range readings ($n = 16$), $v_t$ and $\omega_t$ are the measured linear and angular velocities. The minimum laser reading \(d_{\min} = \min\{d_1, \ldots, d_n\}\) is used for collision detection and reward shaping.} 

\rebuttal{The action space $a_t \in [-1,1]^2$ is used to control the robot following the control law:}
\begin{align}
    u_t &= B\bigl(M\,a_t + d\bigr) \\&= 
    \begin{bmatrix}
        v_M & 0 \\
        0 & \omega_M
    \end{bmatrix}
    \left(
    \begin{bmatrix}
        1/2 & 0 \\
        0 & 1
    \end{bmatrix}
    a_t + 
    \begin{bmatrix}
        1/2 \\
        0
    \end{bmatrix}
    \right),\\
\end{align}
\rebuttal{where $B \in \mathbb{R}^{2 \times 2}$ is a diagonal matrix containing the maximum linear and angular velocities of the robot, $M \in \mathbb{R}^{2 \times 2}$ is an affine-scaling matrix, and $d \in \mathbb{R}^{2}$ is a translation vector. 
This transformation constrains the robot to move only in the forward direction.}

\rebuttal{The reward function is designed to encourage fast, smooth motion while penalizing proximity to obstacles and excessive angular velocity:}
\begin{align}
r_t &= 
\begin{cases}
3v_t - \left|\frac{\omega_t}{2}\right| - \frac{1}{2}\left(1 - d_{\min} \right), & \text{if } d_{\min} \geq 0.2 \\
-5, & \text{otherwise}
\end{cases}
\end{align}
\rebuttal{The episodes end when $d_{\text{min}}<0.2$ or a maximum time limit is reached. }
\rebuttalnew{This reward encourages a reactive navigation that maximizes the travel distance, measured by integrating the linear velocity, while performing the least possible turns.}

\begin{enumerate}
    \item \rebuttalnew{Corridor Navigation: In this task, the robot operates within a confined rectangular ring-shaped environment, \cref{fig:real-images} (top). The episode time limit is $17$ seconds.}
    
    \item \rebuttalnew{Unstructured navigation: In this task, the robot is deployed within a substantially larger and more complex indoor environment populated with a diverse array of static obstacles, shown in  \cref{fig:real-images} (bottom). The episode time limit is $25$ seconds.}
\end{enumerate}
\rebuttalnew{
At the beginning of the training sequences of both scenarios, the robot is manually placed in a random location. Then it runs autonomously until the maximum time is reached. When $d_{min} < 0.2$, a custom routine repositions the agent using the complete laser scan for orientation, ensuring the robot is parallel to the closest vertical surface and $d_{min} > 0.3$. No external localization system or odometry is employed in these experiments.}
\rebuttal{In both tasks, the goal is to continuously move forward without collisions. The agent receives only real-time LiDAR and speed measurements as observations, and must learn a policy that maximizes the total traveled distance while maintaining safe navigation and avoiding obstacles.}


\begin{figure*}
    \centering
    \includegraphics[width=0.50\linewidth]{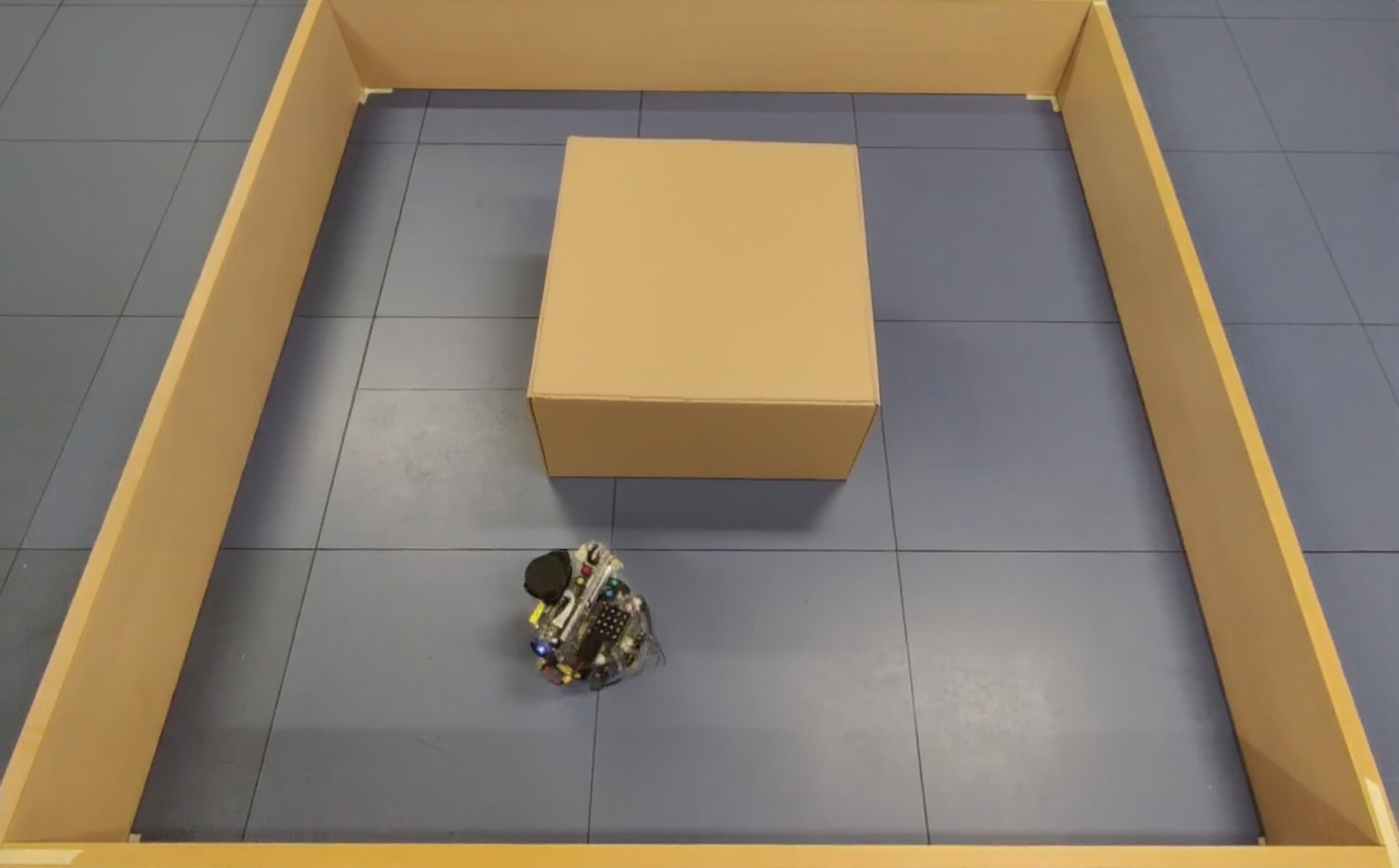}
    \includegraphics[width=0.43\linewidth]{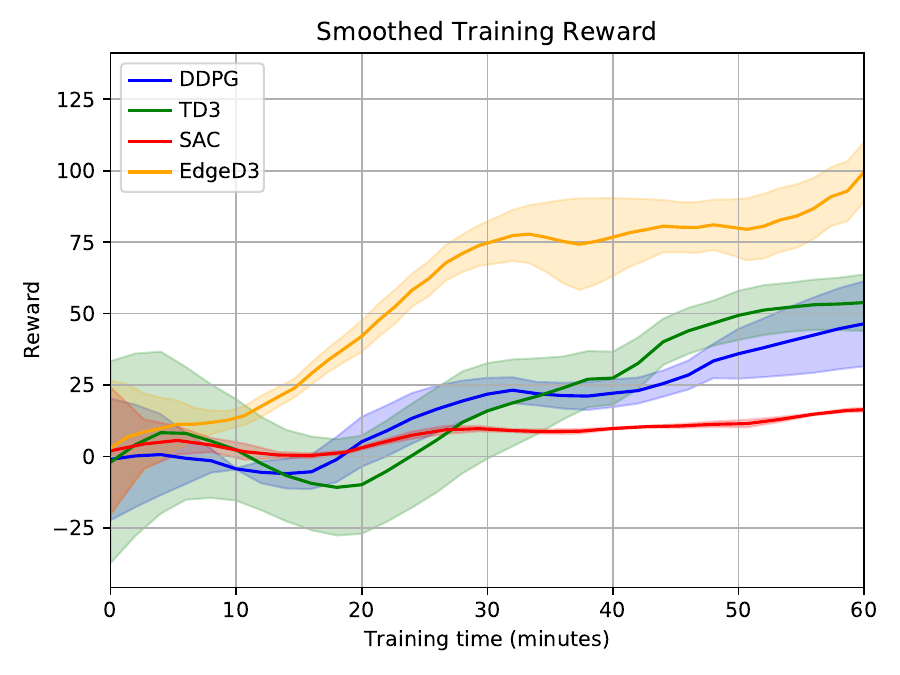}
    \includegraphics[width=0.50\linewidth]{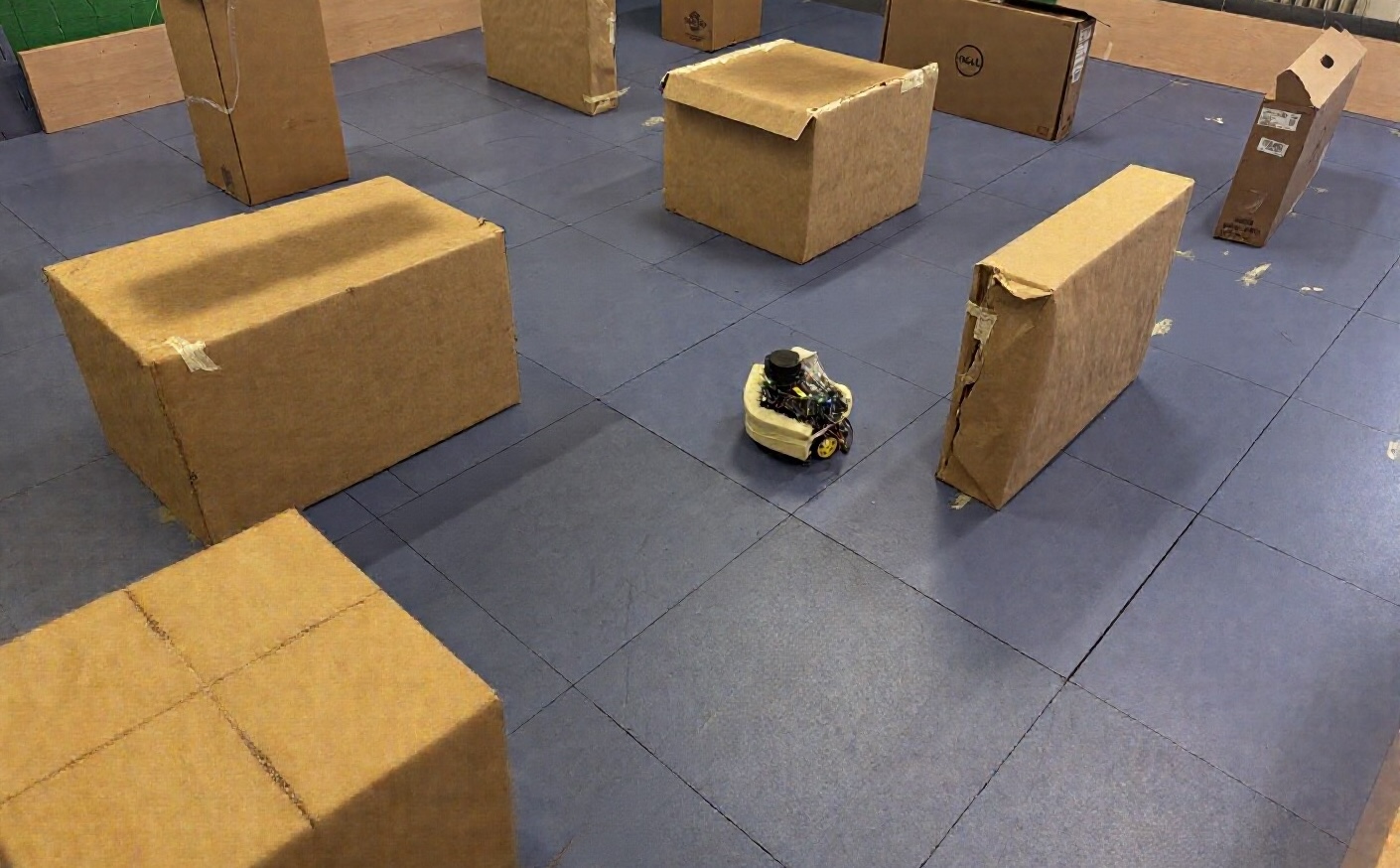}
    \includegraphics[width=0.43\linewidth]{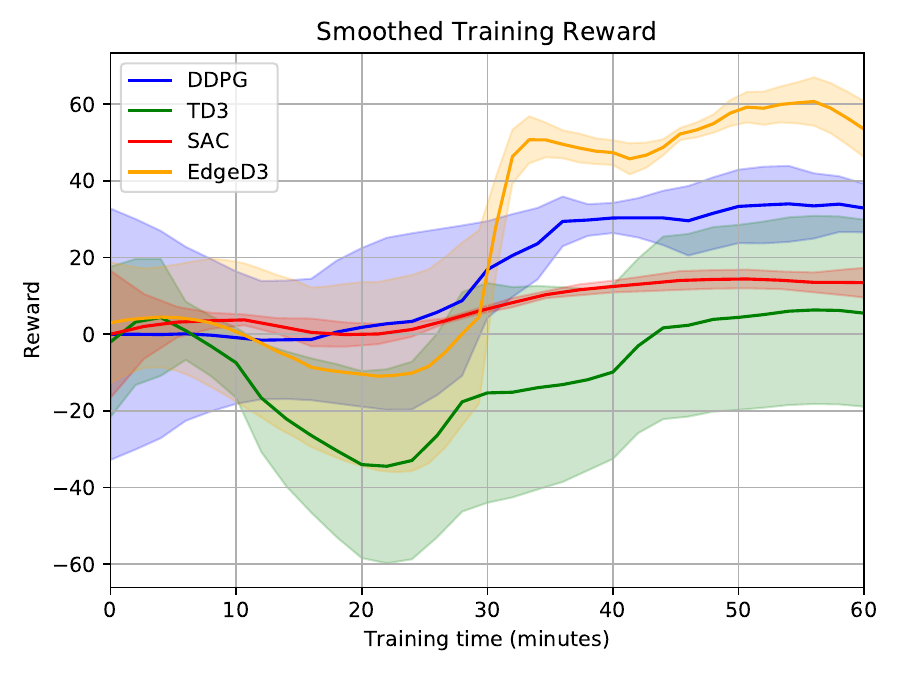}
    \caption{First column shows real-world images of the two tasks, corridor navigation and unstructured navigation. The second column shows the training curves for 1 hour of training for all algorithms, averaged over 5 different seeds.}
    \vspace{-0.5cm}
    \label{fig:real-images}
\end{figure*}

\rebuttal{For both tasks, we compared EdgeD3 to the baselines DDPG, SAC, and TD3 policies, each implemented using the same three-layer, 64-neuron architecture, using a batch size of 128 samples, and all running under identical system constraints. All methods are given 1 wall-clock hour budget for both tasks. Each algorithm was executed for 4 different random seeds.}

\rebuttal{In both scenarios, shown in \cref{fig:real-images}, EdgeD3 obtains strong performance in learning, surpassing all other state-of-the-art methods. In particular, we observe similar relative update rates to the ones measured in the simulations (reported in \cref{table:gpucomp}). Therefore, given the same wall-clock time, EdgeD3 is able to perform more updates than all other approaches, thanks to its lightweight update loop.}

\rebuttal{For reference, on the robot's hardware, the working frequency of the algorithms during training is, respectively $8Hz, 5.9Hz, 5.8Hz, 3.3Hz$ for EdgeD3, TD3, DDPG, and SAC (without delayed updates). Recent algorithms discussed in \Cref{sec:related_work} are not applicable to this setup.}

\rebuttalnew{
A detailed examination of \cref{fig:real-images} reveals distinct learning dynamics for the evaluated algorithms across the two real-world scenarios. In the corridor navigation task (Scenario 1), EdgeD3 demonstrates consistently superior performance from the outset. This is attributable to the relatively structured nature of the environment, where EdgeD3's stabilized target estimation and efficient exploration rapidly yield an effective navigation policy with minimal initial exploration cost.}

\rebuttalnew{Conversely, in the unstructured navigation scenario (Scenario 2), EdgeD3 exhibits suboptimal performance during the initial training phase, with a marked transition to superior performance approximately 30 minutes into training. This phenomenon arises probably due to the increased environmental complexity and variability, which necessitate more extensive exploration and model adaptation before effective generalization can occur. During the initial phase, EdgeD3's conservative target updates, while stabilizing, may delay the exploitation of promising policies as the algorithm accumulates sufficient state-action coverage.}


\begin{table}[h]
\caption{Maximum average return over 10 evaluations across 10 trials,  training a policy for the same amount of wall-clock time. The bolded values represent the best policy learned by each algorithm for each task, and the underlined ones represent the two best.}
\label{table:finalcomp}
\vskip 0.1in
\begin{center}
\begin{small}
\begin{sc}
\begin{tabular}{|c|c|c|c|c|}
\hline
\textbf{Env} & \textbf{ExpD3} & \textbf{DDPG} & \textbf{SAC} & \textbf{TD3}  \\
\hline\hline
Swimmer-V3 & \underline{111.00} & \underline{\textbf{127.82}} & 67.91 & 84.49 \\
Ant-V3 & \underline{\textbf{4350.04}} & 990.55 & 2739.81 & \underline{4208.10} \\
Reacher-V2 & \underline{\textbf{-3.77}} & -4.01 & \underline{-3.79} & -3.84 \\
Hopper-V3 & \underline{\textbf{3388.44}} & 2222.85 & \underline{3148.89} & 2786.22 \\
Walker2D-V3 & \underline{\textbf{3788.07}} & 1601.16 & 2974.40 & \underline{3580.83} \\
Humanoid-V3 & \underline{4331.23} & 1097.72 & 3915.94 & \underline{\textbf{4728.36}} \\
HalfCheetah & \underline{\textbf{10645.8}} & \underline{10309.0} & 8937.3 & 9677.5 \\
\hline
\end{tabular}
\end{sc}
\end{small}
\end{center}
\vspace{-0.5cm}
\end{table}


\section{Conclusions and future work}
\label{sec:conclusions}
Edge computing is always gaining more attention thanks to its ability to allow for scalable deployment and preserving the privacy of the final user, handling the computation directly on-device.
We present Edge Delayed Deep Deterministic Policy Gradient (EdgeD3), which builds on top of Deep Deterministic Policy Gradient (DDPG) \cite{lillicrap2015ddpg}. We introduce a new lightweight, easy-to-implement, highly tunable loss that trades off overestimation and underestimation by exploiting an unbalanced loss, described in \cref{eq:expectile}. Furthermore, we include new tricks to stabilize the training without additional costs. This algorithm aims to be a step towards scalable Deep Reinforcement Learning algorithms for edge computing, thus aiming at minimizing the computational cost and the memory footprint, while not hindering performance.
As done by previous works, such as TD3 and SAC, it achieves better performance by tackling the overestimation bias brought by the temporal difference loss of Q-learning. However, instead of using an ensemble of estimators, EdgeD3 exploits a new Expectile loss to do so while avoiding adding computational burden, thus keeping its property of being edge-friendly.

This new expectile loss, combined with additional blocks proposed by various other works in literature, such as the target smoothing and the delayed policy update, allow us to create a method that is $30\%$ computationally cheaper than the current state-of-the-art methods, preserves a memory footprint on the same level as the original algorithm and $30\%$ smaller than the state of the art, all by preserving the same performances of such more computationally demanding algorithms. \rebuttal{Furthermore, the approach is evaluated on real-world edge scenarios, validating its ability to learn effective policies with minimal computational resources.}

\rebuttal{Potential future research avenues involve investigating other unbalanced losses, such as quantile loss or unbalanced-huber loss, which have the same computational cost as the proposed expectile loss while being more robust to outlier values. Furthermore, the proposed method introduces a hyperparameter that controls the overestimation and the underestimation. Even though an a priori good guess for such hyperparameters exists, there is the possibility to extend the current algorithm with online fine-tuning of such hyperparameters in order to tune it automatically.}

\bibliographystyle{unsrt} 
\bibliography{cas-refs}

\begin{thebibliography}{10}

\bibitem{recht2019continuous_control}
Benjamin Recht.
\newblock A tour of reinforcement learning: The view from continuous control.
\newblock {\em Annual Review of Control, Robotics, and Autonomous Systems},
  2:253--279, 2019.

\bibitem{cai2024deep}
Boliang Cai, Changyun Wei, and Ze~Ji.
\newblock Deep reinforcement learning with multiple unrelated rewards for agv
  mapless navigation.
\newblock {\em IEEE Transactions on Automation Science and Engineering}, 2024.

\bibitem{hao2021deep}
Gaofeng Hao, Zhuang Fu, Xin Feng, Zening Gong, Peng Chen, Dan Wang, Weibin
  Wang, and Yang Si.
\newblock A deep deterministic policy gradient approach for vehicle speed
  tracking control with a robotic driver.
\newblock {\em IEEE Transactions on Automation Science and Engineering},
  19(3):2514--2525, 2021.

\bibitem{yan2021hybrid}
Ruidong Yan, Rui Jiang, Bin Jia, Jin Huang, and Diange Yang.
\newblock Hybrid car-following strategy based on deep deterministic policy
  gradient and cooperative adaptive cruise control.
\newblock {\em IEEE Transactions on Automation Science and Engineering},
  19(4):2816--2824, 2021.

\bibitem{cui2024task}
Yuanzhe Cui, Zhipeng Xu, Lou Zhong, Pengjie Xu, Yichao Shen, and Qirong Tang.
\newblock A task-adaptive deep reinforcement learning framework for dual-arm
  robot manipulation.
\newblock {\em IEEE Transactions on Automation Science and Engineering},
  22:466--479, 2024.

\bibitem{zhou2025sample}
Yifan Zhou, Qiyu Feng, Yixuan Zhou, Jianghao Lin, Zhe Liu, and Hesheng Wang.
\newblock Sample-efficient deep reinforcement learning of mobile manipulation
  for 6-dof trajectory following.
\newblock {\em IEEE Transactions on Automation Science and Engineering}, 2025.

\bibitem{wiebe2024reinforcement}
Felix Wiebe, Niccol{\`o} Turcato, Alberto Dalla~Libera, Chi Zhang, Theo
  Vincent, Shubham Vyas, Giulio Giacomuzzo, Ruggero Carli, Diego Romeres, Akhil
  Sathuluri, et~al.
\newblock Reinforcement learning for athletic intelligence: Lessons from the
  1st “ai olympics with realaigym” competition,”.
\newblock In {\em Proceedings of the Thirty-Third International Joint
  Conference on Artificial Intelligence, IJCAI-24 (K. Larson, ed.)}, pages
  8833--8837, 2024.

\bibitem{miao2024effective}
Chenyang Miao, Yunduan Cui, Huiyun Li, and Xinyu Wu.
\newblock Effective multi-agent deep reinforcement learning control with
  relative entropy regularization.
\newblock {\em IEEE Transactions on Automation Science and Engineering}, 2024.

\bibitem{li2025transfer}
Jiawen Li and Jichao Dai.
\newblock Transfer multi-agent deep meta reinforcement learning method for load
  frequency control of performance market-based multi-area microgrid with
  prosumers.
\newblock {\em IEEE Transactions on Automation Science and Engineering}, 2025.

\bibitem{li2024multi}
Kuo Li and Qing-Shan Jia.
\newblock Multi-agent reinforcement learning with decentralized distribution
  correction.
\newblock {\em IEEE Transactions on Automation Science and Engineering}, 2024.

\bibitem{dann2014policy_temporal_diff}
Christoph Dann, Gerhard Neumann, Jan Peters, et~al.
\newblock Policy evaluation with temporal differences: A survey and comparison.
\newblock {\em Journal of Machine Learning Research}, 15:809--883, 2014.

\bibitem{watkins1992qlearning}
Christopher~JCH Watkins and Peter Dayan.
\newblock Q-learning.
\newblock {\em Machine learning}, 8:279--292, 1992.

\bibitem{lillicrap2015ddpg}
Timothy~P Lillicrap, Jonathan~J Hunt, Alexander Pritzel, Nicolas Heess, Tom
  Erez, Yuval Tassa, David Silver, and Daan Wierstra.
\newblock Continuous control with deep reinforcement learning.
\newblock {\em arXiv preprint arXiv:1509.02971}, 2015.

\bibitem{fujimoto2018td3}
Scott Fujimoto, Herke Hoof, and David Meger.
\newblock Addressing function approximation error in actor-critic methods.
\newblock In {\em International conference on machine learning}, pages
  1587--1596. PMLR, 2018.

\bibitem{haarnoja2018sac}
Tuomas Haarnoja, Aurick Zhou, Pieter Abbeel, and Sergey Levine.
\newblock Soft actor-critic: Off-policy maximum entropy deep reinforcement
  learning with a stochastic actor.
\newblock In {\em International conference on machine learning}, pages
  1861--1870. PMLR, 2018.

\bibitem{van2016deep}
Hado Van~Hasselt, Arthur Guez, and David Silver.
\newblock Deep reinforcement learning with double q-learning.
\newblock In {\em Proceedings of the AAAI conference on artificial
  intelligence}, volume~30, 2016.

\bibitem{lyu2022efficient}
Jiafei Lyu, Xiaoteng Ma, Jiangpeng Yan, and Xiu Li.
\newblock Efficient continuous control with double actors and regularized
  critics.
\newblock In {\em Proceedings of the AAAI Conference on Artificial
  Intelligence}, volume~36, pages 7655--7663, 2022.

\bibitem{zhou2024t}
Yanmin Zhou, Yiyang Jin, Ping Lu, Shuo Jiang, Zhipeng Wang, and Bin He.
\newblock T-td3: A reinforcement learning framework for stable grasping of
  deformable objects using tactile prior.
\newblock {\em IEEE Transactions on Automation Science and Engineering}, 2024.

\bibitem{aslam2025dartbot}
Shoaib Aslam, Krish Kumar, Pokuang Zhou, Hongyu Yu, Michael~Yu Wang, and
  Yu~She.
\newblock Dartbot: Overhand throwing of deformable objects with tactile sensing
  and reinforcement learning.
\newblock {\em IEEE Transactions on Automation Science and Engineering}, 2025.

\bibitem{egbomwan2024physics}
Osarodion~Emmanuel Egbomwan, Hicham Chaoui, and Shichao Liu.
\newblock A physics-constrained td3 algorithm for simultaneous virtual inertia
  and damping control of grid-connected variable speed dfig wind turbines.
\newblock {\em IEEE Transactions on Automation Science and Engineering},
  22:958--969, 2024.

\bibitem{ma2023sample}
Ruichen Ma, Yu~Wang, Shuo Wang, Long Cheng, Rui Wang, and Min Tan.
\newblock Sample-observed soft actor-critic learning for path following of a
  biomimetic underwater vehicle.
\newblock {\em IEEE Transactions on Automation Science and Engineering}, 2023.

\bibitem{li2023realistic}
Sicen Li, Qinyun Tang, Yiming Pang, Xinmeng Ma, and Gang Wang.
\newblock Realistic actor-critic: A framework for balance between value
  overestimation and underestimation.
\newblock {\em Frontiers in Neurorobotics}, 16:1081242, 2023.

\bibitem{chen2020randomized}
Xinyue Chen, Che Wang, Zijian Zhou, and Keith~W Ross.
\newblock Randomized ensembled double q-learning: Learning fast without a
  model.
\newblock In {\em International Conference on Learning Representations}, 2020.

\bibitem{shu2025episodic}
Man Shu, Shuai Lu, Xiaoyu Gong, Daolong An, and Songlin Li.
\newblock Episodic memory-double actor--critic twin delayed deep deterministic
  policy gradient.
\newblock {\em Neural Networks}, 187:107286, 2025.

\bibitem{messaoud2024s}
Safa Messaoud, Billel Mokeddem, Zhenghai Xue, Linsey Pang, Bo~An, Haipeng Chen,
  and Sanjay Chawla.
\newblock S2ac: Energy-based reinforcement learning with stein soft actor
  critic.
\newblock {\em arXiv preprint arXiv:2405.00987}, 2024.

\bibitem{sturm2025deep}
Valentin Sturm.
\newblock Deep reinforcement learning by enhancing td3 with multiple q-critics
  and different aggregation functions.
\newblock In {\em 2025 11th International Conference on Automation, Robotics,
  and Applications (ICARA)}, pages 322--327. IEEE, 2025.

\bibitem{liu2020computing}
Liangkai Liu, Sidi Lu, Ren Zhong, Baofu Wu, Yongtao Yao, Qingyang Zhang, and
  Weisong Shi.
\newblock Computing systems for autonomous driving: State of the art and
  challenges.
\newblock {\em IEEE Internet of Things Journal}, 8(8):6469--6486, 2020.

\bibitem{catarinucci2015iot}
Luca Catarinucci, Danilo De~Donno, Luca Mainetti, Luca Palano, Luigi Patrono,
  Maria~Laura Stefanizzi, and Luciano Tarricone.
\newblock An iot-aware architecture for smart healthcare systems.
\newblock {\em IEEE internet of things journal}, 2(6):515--526, 2015.

\bibitem{wang2021enabling}
Erwei Wang, James~J Davis, Daniele Moro, Piotr Zielinski, Jia~Jie Lim,
  Claudionor Coelho, Satrajit Chatterjee, Peter~YK Cheung, and George~A
  Constantinides.
\newblock Enabling binary neural network training on the edge.
\newblock In {\em Proceedings of the 5th international workshop on embedded and
  mobile deep learning}, pages 37--38, 2021.

\bibitem{cheng2023survey}
Hongrong Cheng, Miao Zhang, and Javen~Qinfeng Shi.
\newblock A survey on deep neural network pruning-taxonomy, comparison,
  analysis, and recommendations.
\newblock {\em arXiv preprint arXiv:2308.06767}, 2023.

\bibitem{mujoco}
Emanuel Todorov, Tom Erez, and Yuval Tassa.
\newblock Mujoco: A physics engine for model-based control.
\newblock In {\em 2012 IEEE/RSJ International Conference on Intelligent Robots
  and Systems}, pages 5026--5033, 2012.

\bibitem{OpenAI_gym}
Greg Brockman, Vicki Cheung, Ludwig Pettersson, Jonas Schneider, John Schulman,
  Jie Tang, and Wojciech Zaremba.
\newblock Openai gym, 2016.

\bibitem{xin2021multimobile}
Jing Xin, Guo Xie, Bo~Yan, Mao Shan, Peng Li, and Kaiyuan Gao.
\newblock Multimobile robot cooperative localization using ultrawideband sensor
  and gpu acceleration.
\newblock {\em IEEE Transactions on Automation Science and Engineering},
  19(4):2699--2710, 2021.

\bibitem{anderson2020mobile}
Robert~Blake Anderson, Mitch Pryor, Adrian Abeyta, and Sheldon Landsberger.
\newblock Mobile robotic radiation surveying with recursive bayesian estimation
  and attenuation modeling.
\newblock {\em IEEE Transactions on Automation Science and Engineering},
  19(1):410--424, 2020.

\bibitem{gao2023efficient}
Yan Gao, Jing Wu, Xintong Yang, and Ze~Ji.
\newblock Efficient hierarchical reinforcement learning for mapless navigation
  with predictive neighbouring space scoring.
\newblock {\em IEEE Transactions on Automation Science and Engineering}, 2023.

\bibitem{thrun2014issues_funapprox}
Sebastian Thrun and Anton Schwartz.
\newblock Issues in using function approximation for reinforcement learning.
\newblock In {\em Proceedings of the 1993 connectionist models summer school},
  pages 255--263. Psychology Press, 2014.

\bibitem{hasselt2010double_qlearning}
Hado Hasselt.
\newblock Double q-learning.
\newblock {\em Advances in neural information processing systems}, 23, 2010.

\bibitem{lan2019maxmin_qlearning}
Qingfeng Lan, Yangchen Pan, Alona Fyshe, and Martha White.
\newblock Maxmin q-learning: Controlling the estimation bias of q-learning.
\newblock In {\em International Conference on Learning Representations}, 2019.

\bibitem{kuznetsov2020controlling_overest}
Arsenii Kuznetsov, Pavel Shvechikov, Alexander Grishin, and Dmitry Vetrov.
\newblock Controlling overestimation bias with truncated mixture of continuous
  distributional quantile critics.
\newblock In {\em International Conference on Machine Learning}, pages
  5556--5566. PMLR, 2020.

\bibitem{wei2022controlling}
Wei Wei, Yujia Zhang, Jiye Liang, Lin Li, and Yyuze Li.
\newblock Controlling underestimation bias in reinforcement learning via
  quasi-median operation.
\newblock In {\em Proceedings of the AAAI Conference on Artificial
  Intelligence}, volume~36, pages 8621--8628, 2022.

\bibitem{janner2019mbpo}
Michael Janner, Justin Fu, Marvin Zhang, and Sergey Levine.
\newblock When to trust your model: Model-based policy optimization.
\newblock {\em Advances in neural information processing systems}, 32, 2019.

\bibitem{silver2014deterministic_policy_grad}
David Silver, Guy Lever, Nicolas Heess, Thomas Degris, Daan Wierstra, and
  Martin Riedmiller.
\newblock Deterministic policy gradient algorithms.
\newblock In {\em International conference on machine learning}, pages
  387--395. Pmlr, 2014.

\bibitem{newey1987asymmetric}
Whitney~K Newey and James~L Powell.
\newblock Asymmetric least squares estimation and testing.
\newblock {\em Econometrica: Journal of the Econometric Society}, pages
  819--847, 1987.

\bibitem{kingma2014adam}
Diederik~P Kingma and Jimmy Ba.
\newblock Adam: A method for stochastic optimization.
\newblock {\em arXiv preprint arXiv:1412.6980}, 2014.

\bibitem{sutton2018reinforcement}
Richard~S Sutton and Andrew~G Barto.
\newblock {\em Reinforcement learning: An introduction}.
\newblock MIT press, 2018.

\bibitem{knight1998limiting}
Keith Knight.
\newblock Limiting distributions for l1 regression estimators under general
  conditions.
\newblock {\em Annals of statistics}, pages 755--770, 1998.

\bibitem{ruder2016overview}
Sebastian Ruder.
\newblock An overview of gradient descent optimization algorithms.
\newblock {\em arXiv preprint arXiv:1609.04747}, 2016.

\bibitem{goodfellow2014explaining}
Ian~J Goodfellow, Jonathon Shlens, and Christian Szegedy.
\newblock Explaining and harnessing adversarial examples.
\newblock {\em arXiv preprint arXiv:1412.6572}, 2014.

\bibitem{nachum2018smoothed}
Ofir Nachum, Mohammad Norouzi, George Tucker, and Dale Schuurmans.
\newblock Smoothed action value functions for learning gaussian policies.
\newblock In {\em International Conference on Machine Learning}, pages
  3692--3700. PMLR, 2018.

\bibitem{miyato2018spectral}
Takeru Miyato, Toshiki Kataoka, Masanori Koyama, and Yuichi Yoshida.
\newblock Spectral normalization for generative adversarial networks.
\newblock {\em arXiv preprint arXiv:1802.05957}, 2018.

\bibitem{rifai2011contractive}
Salah Rifai, Pascal Vincent, Xavier Muller, Xavier Glorot, and Yoshua Bengio.
\newblock Contractive auto-encoders: Explicit invariance during feature
  extraction.
\newblock In {\em Proceedings of the 28th international conference on
  international conference on machine learning}, pages 833--840, 2011.

\bibitem{song2021train}
Yang Song and Diederik~P Kingma.
\newblock How to train your energy-based models.
\newblock {\em arXiv preprint arXiv:2101.03288}, 2021.

\bibitem{carvalho2021edge}
Gon{\c{c}}alo Carvalho, Bruno Cabral, Vasco Pereira, and Jorge Bernardino.
\newblock Edge computing: current trends, research challenges and future
  directions.
\newblock {\em Computing}, 103(5):993--1023, 2021.

\bibitem{deng2020edge}
Shuiguang Deng, Hailiang Zhao, Weijia Fang, Jianwei Yin, Schahram Dustdar, and
  Albert~Y Zomaya.
\newblock Edge intelligence: The confluence of edge computing and artificial
  intelligence.
\newblock {\em IEEE Internet of Things Journal}, 7(8):7457--7469, 2020.

\bibitem{shi2016edge}
Weisong Shi, Jie Cao, Quan Zhang, Youhuizi Li, and Lanyu Xu.
\newblock Edge computing: Vision and challenges.
\newblock {\em IEEE internet of things journal}, 3(5):637--646, 2016.

\bibitem{schulman2017proximal}
John Schulman, Filip Wolski, Prafulla Dhariwal, Alec Radford, and Oleg Klimov.
\newblock Proximal policy optimization algorithms.
\newblock {\em arXiv preprint arXiv:1707.06347}, 2017.

\end{thebibliography}
\clearpage
\appendix
\subsection{EdgeDDPG pseudocode}
\label{sec:code-expddpg}
\rebuttal{In \cref{alg:expddpg} we report the full pseudocode referring to EdgeDDPG, which is strictly to be used to assess and isolate the performance improvement brought by the newly introduced loss. In particular, the final algorithm still remains  EdgeD3 (\cref{alg:edged3}), which incorporates in EdgeDDPG (\cref{alg:expddpg}) further improvements described in \cref{sec:smoothing}.}

\begin{algorithm}[!h]
   \caption{Edge Deep Deterministic Policy Gradient (EdgeDDPG)}
   \label{alg:expddpg}
\begin{algorithmic}[5]
    \STATE Given $\alpha, \beta, \tau_1, \tau_2$ and $\lambda(t)$
   \STATE Initialize critic $Q_{\theta}$, and actor $\mu_\phi$ networks
   \STATE Initialize target networks $\theta' \leftarrow \theta, \phi' \leftarrow \phi$
   \STATE Initialize replay memory \textbf{$\mathcal{B}$}
   \REPEAT
       \REPEAT
            \STATE Select action with exploration noise $a \sim \mu_{\phi'}(s) + \omega$, $\omega \sim \mathcal{N}(0, \sigma)$ and observe $r$ and  $s'$.
            \STATE $d=\begin{cases} 1 \text{ if } s'\text{ is terminal} \\ 0 \text{ otherwise} \end{cases}$
            \STATE Store $(s,a,r,s',d)$ tuple in $\mathcal{B}$
            \STATE Sample batch of $N$ tuples $(s,a,r,s',d)$ from $\mathcal{B}$
            \STATE $y = r + \gamma Q_{\theta'}(s', \mu_{\phi'}(s')) \cdot (1-d)$ 
            \STATE $\nabla L(\theta) = \nabla_\theta N^{-1} \Sigma L^{\alpha,\beta}(y, Q_{\theta}(s,a))$ [\cref{eq:expectile}]
            \STATE Update $Q_\theta$ via GD using $\nabla L(\theta)$
            \STATE Update $\phi$ by deterministic policy gradient:
            \STATE $\nabla_\phi J(\phi) = \frac{1}{N} \Sigma \nabla_a Q_{\theta}(s,a) |_{a=\mu_\phi(s)} \nabla_\phi \mu_{\phi}(s)$
            \STATE Update target networks:
            \STATE $\theta' \leftarrow \tau_1 \theta + (1-\tau_1) \theta'$
            \STATE $\phi' \leftarrow \tau_2 \phi + (1-\tau_2) \phi'$
            
       \UNTIL{$d$ is false}
       \STATE $\min(\alpha, \beta) \leftarrow \min(\alpha, \beta) + |\alpha - \beta| \cdot \lambda(t) $
   \UNTIL{$t < T$}
\end{algorithmic}
\end{algorithm}

\section{Reproducibility}
\label{sec:app_reproducibility}

\subsection{Practical Considerations for the Selection of $\alpha$ and $\beta$}
\label{sec:alpha-beta-practical}
\rebuttalnew{The hyperparameters $\alpha$ and $\beta$ play pivotal roles in the stability and efficacy of the proposed algorithm. In practice, $\alpha$ and $\beta$ are used to balance the contributions of two competing estimators within the target update mechanism. The regime $\alpha < \beta$ is generally preferred in environments characterized by high stochasticity or when the policy exhibits high variance, as it introduces additional conservatism in target value estimation, thereby mitigating the risk of overestimation bias. This setting is often advantageous in real-world robotics and control scenarios where measurement noise or model uncertainty is prevalent.}

\rebuttalnew{Conversely, the regime $\alpha > \beta$ can be considered in highly deterministic environments or when empirical evidence suggests persistent underestimation of value functions, potentially leading to overly conservative policies. In these settings, a slight relaxation of conservatism may facilitate more aggressive exploration and policy improvement. However, in our experience, such cases are infrequent, and $\alpha \leq \beta$ remains the default configuration for most applications. This regime can be thought of as a loss-induced \textit{optimism in the face of uncertainty}, famous in RL to improve exploration.}

\rebuttalnew{Both $\alpha$ and $\beta$ are treated as tunable hyperparameters and should be selected via standard validation procedures, e.g., grid search or Bayesian optimization, within a plausible range. It is crucial to monitor both the learning stability and the empirical performance during tuning. We emphasize that the primary determinant for $\alpha$ and $\beta$ selection should be the desired trade-off between optimism and conservatism in target value estimation, guided by the stochasticity and complexity of the deployment environment. Default values of $\alpha = 1$ and $\beta = 2$ have demonstrated robust performance across a wide variety of tasks in our evaluations, therefore are highly suggested as a starting point for the tuning, if even needed.}

\subsection{Hyper-parameters and further ablations}
In \cref{table:hp}, we report the hyperparameters used for the simulations. For a fair comparison, the hyper-parameters that could lead to an unfair setting, such as the stepsize, have been kept constant throughout all the methods. For the hyper-parameters that were not common to all of them, we used the one reported in the respective original papers. Regarding the proposed methods, the only parameters that have been varied are the hyperparameters for controlling the trade-off between overestimation and underestimation, formerly $\alpha, \beta$. However, to avoid cherrypicking of such parameters, only good guesses have been used, and are all reported in \cref{fig:all_comparison}. Regarding the decay, we observed little improvement in using both a linear decay and an exponential decay during the execution. Since such decay would be part of non-trivially tunable hyperparameters, and we wanted to keep the method as simple as possible, we decided to use $\lambda(t) = 1$, so no decay has been applied during any of the training reported throughout the paper. Thus, all the curves report learning done with fixed $\alpha, \beta$. Regarding the noise distribution used for the action in the target estimation, we used the clipped Normal distribution introduced in \cite{fujimoto2018td3}.

\begin{table*}[t]
\caption{ 
    List of hyperparameters used for training.}
\label{table:hp}
\vskip 0.15in
\begin{center}
\begin{small}
\begin{sc}
\begin{tabular}{|l|c|c|c|c|c|c|}
\hline
\textbf{Hyper-parameter} & \textbf{TD3} & \textbf{DDPG} & \textbf{EdgeDDPG} & \textbf{EdgeD3} & \textbf{SAC} & \textbf{PPO}\\
\hline\hline
Critic learning-rate& $0.0003$ & $0.0003$ & $0.0003$ & $0.0003$ &  $0.0003$&  $0.0003$ \\
Actor learning-rate& $0.0003$ & $0.0003$ & $0.0003$ & $0.0003$ & $0.0003$& $0.0003$ \\
Optimizer& Adam & Adam & Adam & Adam & Adam& Adam \\
Target Update Rate ($\tau_1, \tau_2$)& $0.005$ & $0.005$ & $0.005$ & $0.005$ &$0.005$ & - \\
Batch-size& $256$ & $256$ & $256$ & $256$ &  $256$ &  $256$ \\
Training iteration per step & $1$ & $1$ & $1$ & $1$ & $1$ & $10$ \\
Discount factor & $0.99$ & $0.99$ & $0.99$ & $0.99$ &  $0.99$ &  $0.99$ \\
Exploration policy & $N(0, 0.2)$ & $N(0, 0.2)$ & $N(0, 0.2)$ & $N(0, 0.2)$ & learnt & - \\
Entropy & - & - & - & - & 0.5 & - \\
Actor heads count & $1$ & $1$ & $1$ & $1$ & $1$ & - \\
Actor update delay ($d$) & $2$ & $1$ & $1$ & $2$ & $1$ & - \\

\hline
\end{tabular}
\end{sc}
\end{small}
\end{center}
\end{table*}

\subsection{Comparison to Actor Critic}
\rebuttal{Another branch of algorithms in reinforcement learning is the family of actor-critic methods. Actor-critic algorithms combine the advantages of value-based and policy-based approaches by maintaining both a parameterized policy function (the actor) and a value function (the critic). This structure enables the actor to update its policy in the direction suggested by the critic, which estimates the expected return. Classical actor-critic methods, however, are often susceptible to issues such as high variance in policy gradient estimates and instability during training. Yet, they offer learning policies with lower memory requirements than DDPG-like algorithms, as they require only 2 networks, the critic and the actor, without target networks. In this section, we thus compare our proposed approach to Proximal Policy Optimization \cite{schulman2017proximal}, which represents the state-of-the-art algorithm in this family, and the most adopted one, thanks to its performance and robustness.}

\rebuttal{Regarding memory consumption, we find that PPO utilizes 18\% less RAM thanks to the fact that no target network is required. Yet, given that the target network is not optimized by gradient descent, the memory consumption required to store it is less than expected (1 out of 3 neural networks). However, PPO, in order to obtain its performance, utilized multiple steps of updates, with an off-policy loss. These multiple updates makes PPO much more expensive than the proposed approach, requiring 35\% more compute for the same training budget. The overall performances are shown in \cref{fig:ppo}. Regarding EdgeD3, the tradeoff $\alpha =1, \beta=2$ has been used for the comparison, as per the final comparison in \cref{fig:benchmarks_edged3}. Regarding PPO, as reported in the original paper, 1 step of updates was done for the value network, and 10 for the actor. Given that such an update count increases the effectiveness of PPO while also increasing computational cost, we did not find a major improvement in increasing over 10 such update count.}

\begin{figure*}[t]
    \centering
    \includegraphics[width=\textwidth]{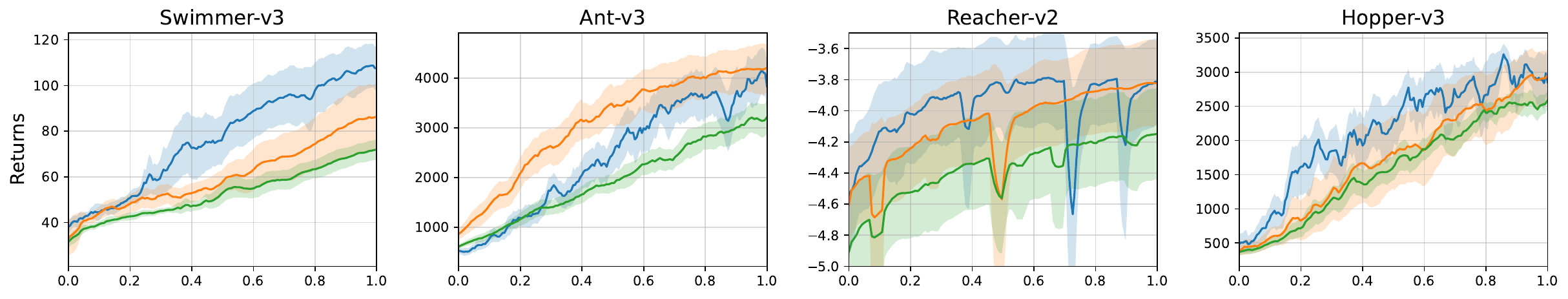}
    
    \includegraphics[width=0.75\textwidth]{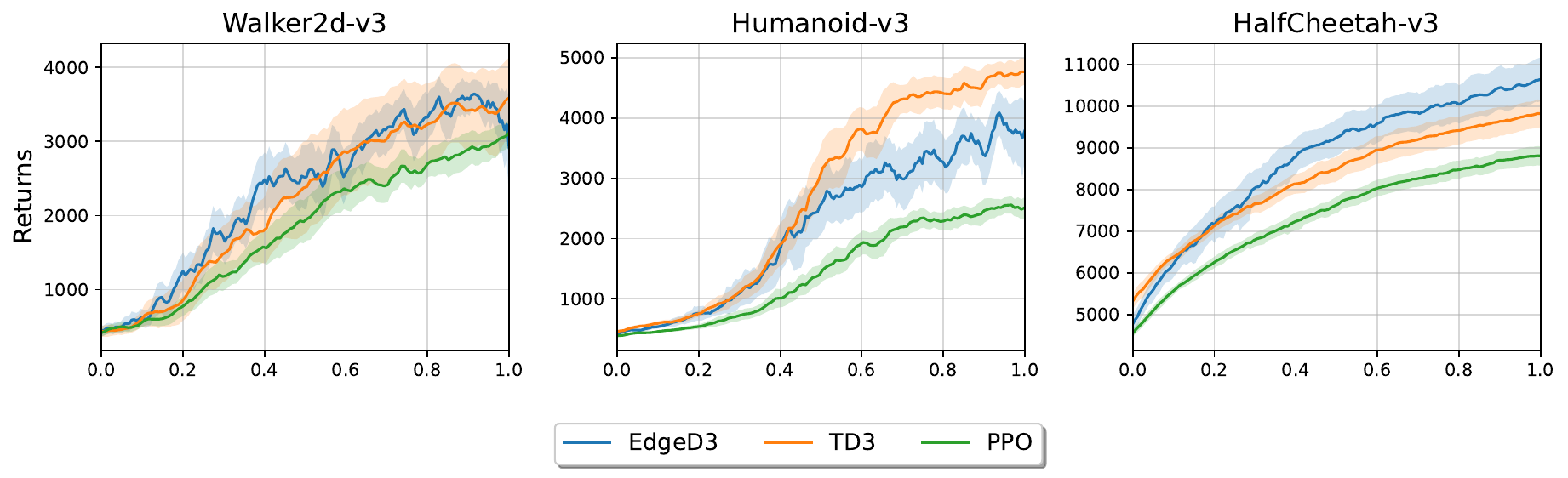}
    \caption{
    Comparing EdgeD3 with PPO and TD3 in continuous control tasks. 
    Plots are from 10 random seeds for simulator and network initializations, smoothed for visualization. Evaluations of Return are performed every 5000 time steps, plots show mean and standard deviation, over 10 episodes.}
    \label{fig:ppo}
\end{figure*}

\section{Ablation over various tradeoffs}
\label{sec:app_ablation_trade_off}
Indeed, the expectile loss allows for very simple and flexible control over the tradeoff between underestimation and overestimation compared to the CDQ mechanism. Thanks to such freedom, we can actually evaluate various values for $\alpha, \beta$ in order to understand the problem we are trying to solve. For this reason, in \cref{fig:all_comparison}, we compare different tradeoffs. 

\begin{figure*}[t]
    \centering
    \includegraphics[width=\textwidth]{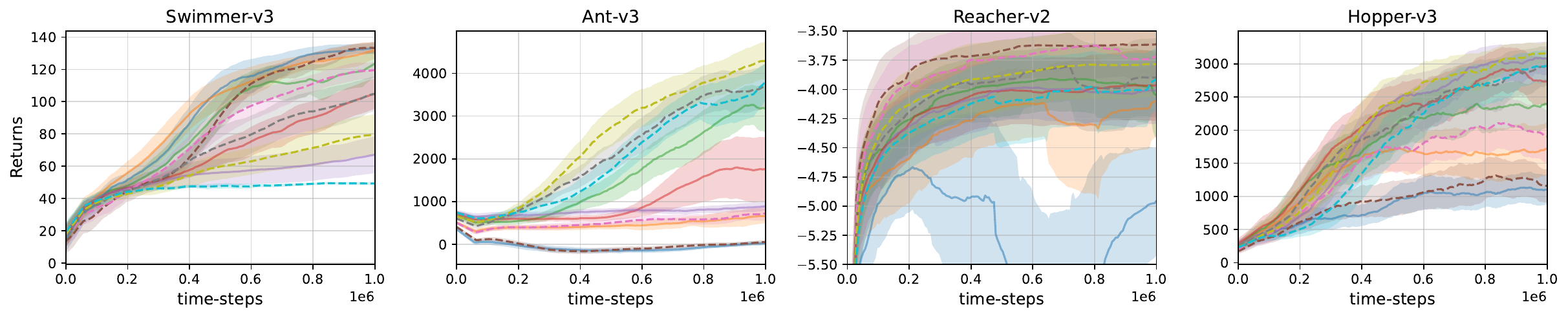}
    
    \includegraphics[width=0.75\textwidth]{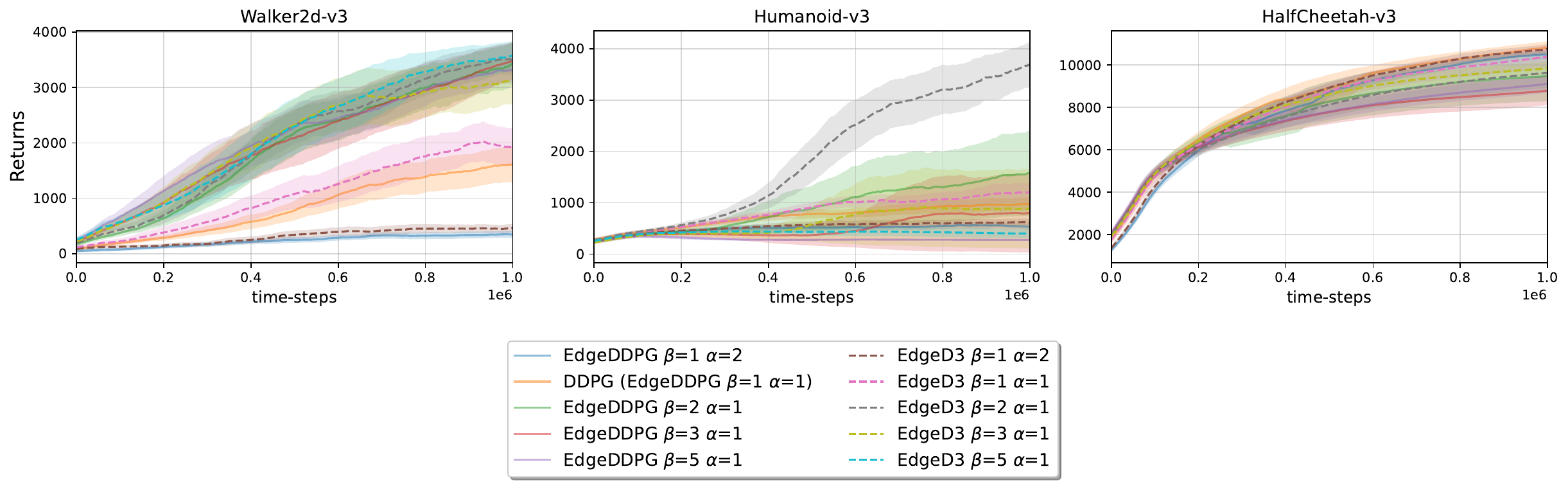}
    \caption{
    Comparing EdgeD3 with EdgeDDPG in continuous control tasks. 
    Plots are from 10 random seeds for simulator and network initializations, smoothed for visualization. Evaluations of Return are performed every 5000 time steps, plots show mean and standard deviation, over 10 episodes.}
    \label{fig:all_comparison}
\end{figure*}

\begin{figure*}[t]
    \centering
    \includegraphics[width=\textwidth]{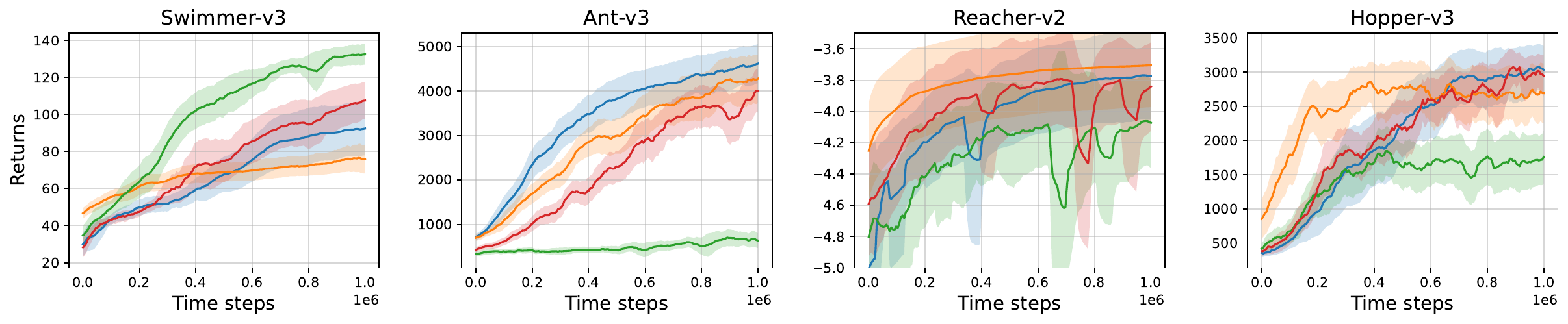}
    
    \includegraphics[width=0.75\textwidth]{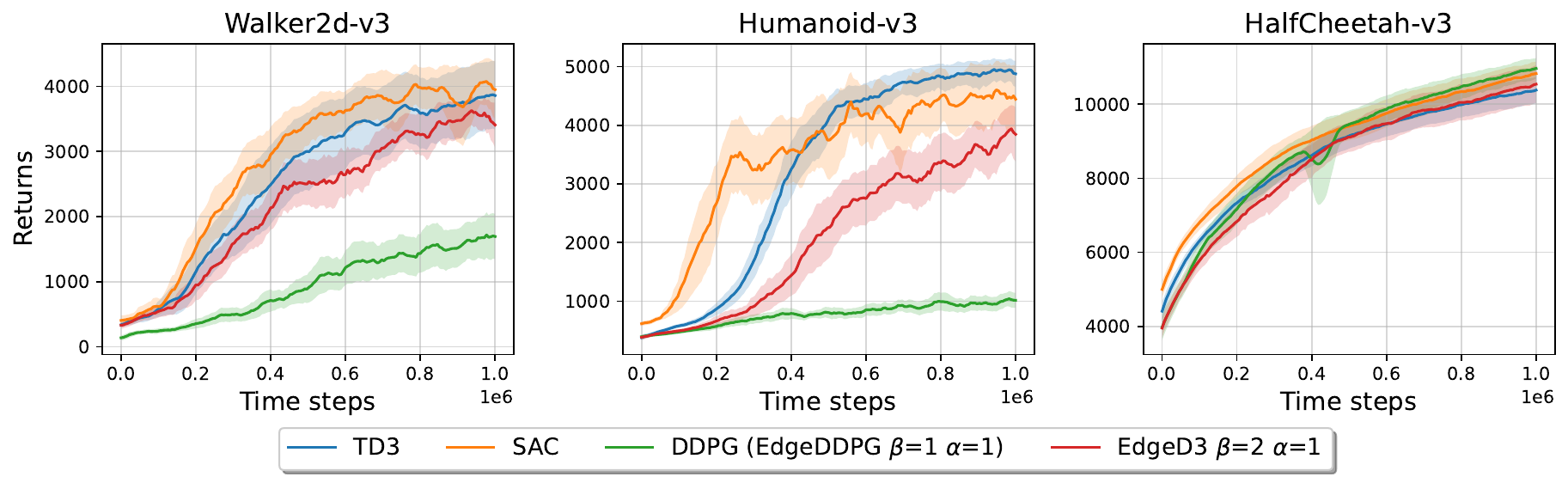}
    \caption{
    Comparing EdgeD3 with baselines in continuous control tasks using environment timesteps. 
    Plots are from 10 random seeds for simulator and network initializations, smoothed for visualization. Evaluations of Return are performed every 5000 time steps, plots show mean and standard deviation, over 10 episodes.}
    \label{fig:comparison_iters}
\end{figure*}

We can observe, particularly in Swimmer, how the idea that underestimation is always a better option than overestimation is definitely not true. Indeed, we can observe a clear positive correlation between how much we prefer overestimation to the final policy performance. If we didn't have an algorithm that allows for such control, we would forced to accept the predefined algorithm performance.\\
Instead, in cases such as Ant, we can see that the trend is definitely the opposite, with the settings that prefer underestimation outperforming the one that prefers underestimation, showing how there is no a priori always-correct choice between the two.\\

\section{Comparison with environment steps}
In \cref{fig:comparison_iters} we report the comparison of the proposed algorithm and the baselines using environment timesteps as a unit of measure. However, we want to emphasize that this might be misleading, as gives no sense of the computational cheapness of the proposed method. Indeed, the aim of this paper is to present a new, computationally, and memory-cheap algorithm suited for edge scenarios. Therefore the focus was on speed and lightness, showing how, taking into consideration these properties, the performance of very established methods such as TD3 and SAC can be reached with much cheaper alternatives.

In \cref{fig:comparison_iters}, we can see how even considering environment timesteps as a unit of measure of progress, EdgeD3 almost always reaches performances very comparable to state-of-the-art methods while taking $25\%$ less time and $30\%$ less memory.

\subsection{Networks architectures}
For TD3, DDPG, EdgeDDPG, \rebuttal{PPO}, and EdgeD3, the actor consists of two fully connected layers of 256 units each with ReLU activations, followed by a fully connected output layer mapping to the action dimension with a hyperbolic tangent activation. For SAC, the actor has the same structure except that the output layer maps to twice the action dimension. For all methods, the critic consists of two fully connected layers of 256 units each with ReLU activations, followed by a single-unit output layer.



\end{document}